\documentclass{article}

\usepackage{microtype}
\usepackage{graphicx}
\usepackage{subcaption}
\usepackage{booktabs} % for professional tables

\usepackage{hyperref}

\usepackage[preprint]{icml2026}

\usepackage{amsmath}
\usepackage{amssymb}
\usepackage{mathtools}
\usepackage{amsthm}
\usepackage{enumitem}

\usepackage{multirow}
\usepackage{siunitx}
\usepackage{float}
\usepackage{afterpage}
\usepackage[table]{xcolor} % enables \rowcolor
\usepackage{caption}
\usepackage{MnSymbol,bbding,pifont}
\usepackage{wrapfig}
\usepackage{svg}
\usepackage[T1]{fontenc}
\usepackage{listings}
\usepackage{xcolor}
\usepackage{minted}

\usepackage[capitalize,noabbrev]{cleveref}

\theoremstyle{plain}
\newtheorem{theorem}{Theorem}[section]
\newtheorem{proposition}[theorem]{Proposition}
\newtheorem{lemma}[theorem]{Lemma}
\newtheorem{corollary}[theorem]{Corollary}
\theoremstyle{definition}

\newtheorem{assumption}[theorem]{Assumption}
\theoremstyle{remark}
\newtheorem{remark}[theorem]{Remark}

\usepackage[textsize=tiny]{todonotes}

\definecolor{GASPshade}{RGB}{220,235,255}

\makeatletter
\@ifundefined{ispreprint}{}{%
  \icmlshowauthorstrue

  \renewcommand{\Notice@String}{}%
  \def\@copyrightspace{}%

  \def\toptitlebar{\hrule height1pt \vskip 0.06in}%
  \def\bottomtitlebar{\vskip 0.06in \hrule height1pt \vskip 0.08in}%

  \renewcommand{\printAffiliationsAndNotice}[1]{%
    \global\icml@noticeprintedtrue%
    \stepcounter{@affiliationcounter}%
    \begingroup
      \footnotesize
      \par\centering
      #1\par
      \vspace{1pt}

      \noindent
      \forloop{@affilnum}{1}{\value{@affilnum} < \value{@affiliationcounter}}{%
        \textsuperscript{\arabic{@affilnum}}%
        \ifcsname @affilname\the@affilnum\endcsname%
          \csname @affilname\the@affilnum\endcsname%
        \else%
          {\bf AUTHORERR: Missing \string\icmlaffiliation.}%
        \fi%
        \ifnum\value{@affilnum}<\numexpr\value{@affiliationcounter}-1\relax
          \hspace{0.6em}{\raisebox{0.15ex}{\tiny$\bullet$}}\hspace{0.6em}%
        \fi
      }%
      \par

      \ifdefined\icmlcorrespondingauthor@text
        \vspace{1pt}\par
        Correspondence to: \icmlcorrespondingauthor@text.\par
      \fi
    \endgroup
  }%
}
\makeatother

\icmltitlerunning{Learning Robust Reasoning through Guided Adversarial Self-Play}

\begin{document}

\twocolumn[
  \icmltitle{Learning Robust Reasoning through Guided Adversarial Self-Play}

  % Local tightening UNDER the title (safer than moving the whole block)
  \vspace{-6pt}

  \icmlsetsymbol{equal}{*}
  % \vspace{-2pt}
  \begin{icmlauthorlist}
    \icmlauthor{Shuozhe Li}{ut}
    \icmlauthor{Vaishnav Tadiparthi}{hri}
    \icmlauthor{Kwonjoon Lee}{hri}
    \icmlauthor{Nakul Agarwal}{hri}
    \icmlauthor{Hossein Nourkhiz Mahjoub}{hri}
    \icmlauthor{Ehsan Moradi Pari}{hri}
    \icmlauthor{Lizhang Chen}{ut}
    \icmlauthor{Amy Zhang}{ut}
    \icmlauthor{Liu Leqi}{ut}
  \end{icmlauthorlist}

  \icmlaffiliation{ut}{University of Texas at Austin, Austin, TX, USA}
  \icmlaffiliation{hri}{Honda Research Institute USA, San Jose, CA, USA}

  \icmlcorrespondingauthor{Shuozhe Li}{shuozhe.li@utexas.edu}

  % Put this INSIDE the title block so it appears right under authors.
  \vspace{-2pt}
  \printAffiliationsAndNotice{}

  \icmlkeywords{Machine Learning, ICML}

  \vspace{-2pt}
  \begin{center}
    \begingroup
    \setkeys{Gin}{width=0.85\textwidth}%
    \includegraphics{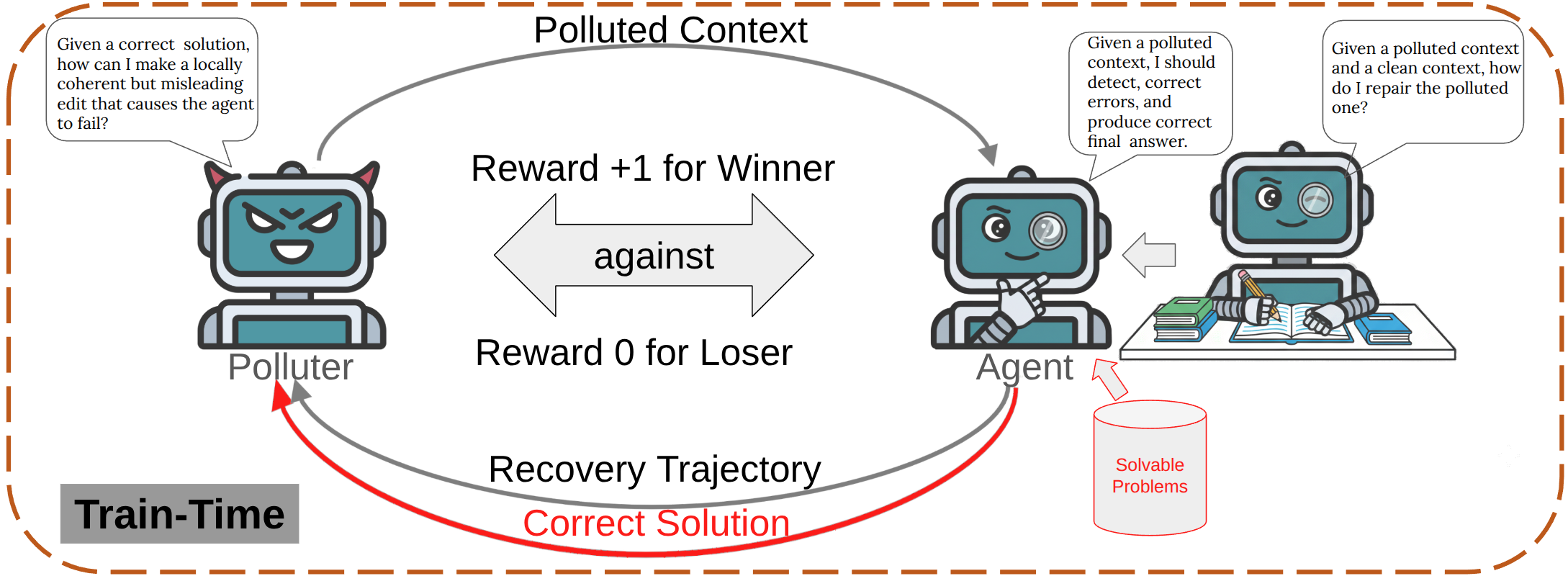}

    \captionsetup{width=0.95\textwidth,justification=raggedright,singlelinecheck=false}
    \vspace{-6pt}
    \captionof{figure}{\textsc{GASP} trains robust reasoners that resist misleading context, detect and repair errors, and answer reliably.}
    \label{fig:open_fig}
    \endgroup
  \end{center}
]

% NOTE: removed the \vspace{-10pt} here to avoid messing with global flow
\begin{abstract}
Reinforcement learning from verifiable rewards (RLVR) produces strong reasoning models, yet they can fail catastrophically when the conditioning context is fallible (e.g., corrupted chain-of-thought, misleading partial solutions, or mild input perturbations), since standard RLVR optimizes final-answer correctness only under clean conditioning. We introduce \textbf{GASP} (\underline{\textbf{G}}uided \underline{\textbf{A}}dversarial \underline{\textbf{S}}elf-\underline{\textbf{P}}lay), a robustification method that explicitly trains detect-and-repair capabilities using only outcome verification. Without human labels or external teachers, \textbf{GASP} forms an adversarial self-play game within a single model: a \textbf{polluter} learns to induce failure via locally coherent corruptions, while an \textbf{agent} learns to diagnose and recover under the same corrupted conditioning. To address the scarcity of successful recoveries early in training, we propose \emph{in-distribution repair guidance}, an imitation term on self-generated repairs that increases recovery probability while preserving previously acquired capabilities. Across four open-weight models (1.5B--8B), \textbf{GASP} transforms strong-but-brittle reasoners into robust ones that withstand misleading and perturbed context while often improving clean accuracy. Further analysis shows that adversarial corruptions induce an effective curriculum, and in-distribution guidance enables rapid recovery learning with minimal representational drift.
\end{abstract}
\begin{figure*}[t]
\centering
% \raggedright
\vspace{-8pt}
\includegraphics[width=0.85\textwidth]{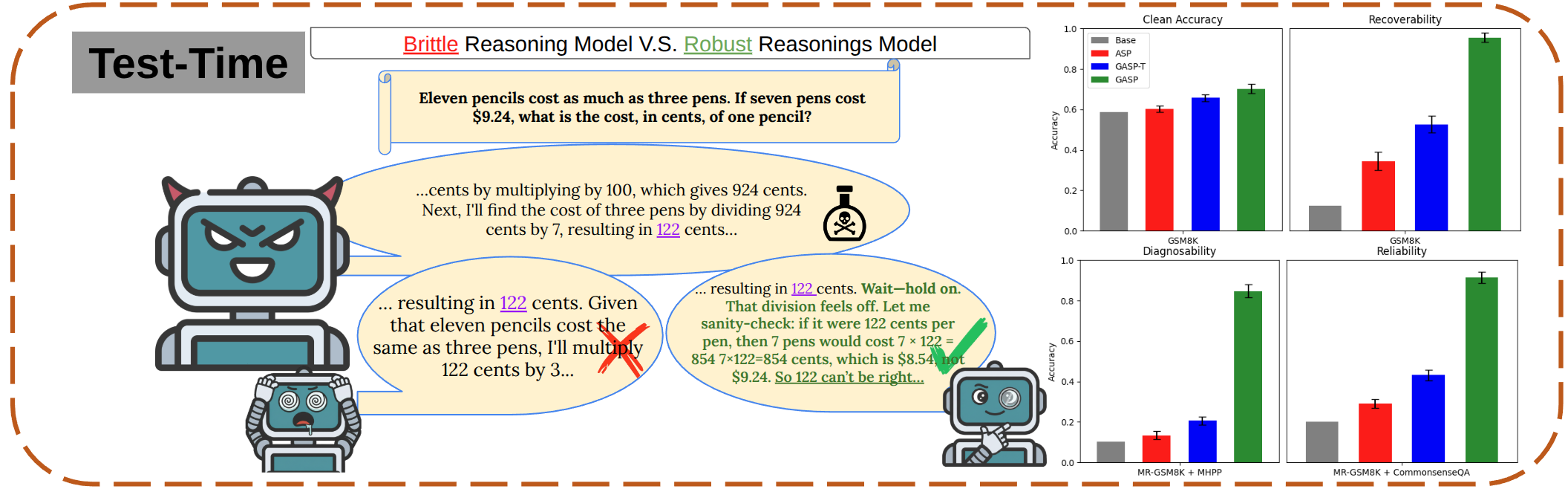}
%\vspace{-8.5pt}
\caption{A brittle model follows corruptions and fails, while a robust model detects the inconsistency, and repairs to reach the correct answer; corresponding results show \textsc{GASP} markedly improves recoverability, diagnosability, reliability, and clean accuracy.}
\label{fig:open_2}
\vspace{-15pt}
\end{figure*}
% \vspace{-30pt}
\section{Introduction}
\vspace{-5pt}
Reinforcement learning from verifiable rewards (RLVR) \citep{yu2025rlpr} has turned large language models into strong ``reasoning'' systems. On optimizing post-training only for final-answer correctness, recent large reasoning models (LRMs) such as DeepSeek-R1 \citep{guo2025deepseek} achieve near-saturated performance on math and coding benchmarks by producing long chain-of-thought (CoT) solutions. RLVR can also induce qualitative changes in behavior: models sometimes express uncertainty, double-check intermediate steps, and revise an answer mid-generation---``aha-moment'' self-reflection behaviors that correlate with higher success rates \citep{yang2025understanding}.

These behaviors are largely an \emph{incidental byproduct} of optimizing final-answer correctness: RLVR does not explicitly train \emph{when} to distrust the conditioning context, \emph{how} to diagnose inconsistencies, or \emph{how} to reliably recover when the context is polluted. However, in real deployments, models rarely operate in such perfectly reliable settings: the conditioning context---the prompt, a partial solution trace, or a collaborator’s reasoning in multi-agent scenarios \citep{cui2025communication}---may be noisy, misleading, or distribution-shifted, and the system must decide what to trust versus what to re-derive. \textbf{Recoverability} tests \citep{li2025off} show that, \textit{even on the correctly answered questions}, inserting a short distracting or locally corrupted snippet into an otherwise correct CoT can cause strong math models to fail catastrophically. Instead of questioning the injected step and \textbf{recover} from corruption, models often treat the visible trajectory as authoritative and follow the corruption; strikingly, this failure mode can exhibit inverse scaling \citep{li2025off}. \textbf{Diagnosability} benchmarks such as MR-GSM8K and MR-Bench \citep{zeng2023mr,zeng2024mr} reveal a complementary weakness under a different kind of fallible context: models that solve the problem well can collapse when asked to diagnose a provided solution, locate the first erroneous step, and explain the mistake. Beyond explicit reasoning traces, \textbf{reliability} benchmarks such as RUPBench \citep{wang2024rupbench} show that reasoning can also be brittle when the input context itself is perturbed by mild lexical, syntactic, and semantic edits, leading to substantial drops across commonsense and logical tasks.

Together, these stress-tests point to a common deficiency in robustness to conditioning context: many LRMs behave as strong \emph{clean} problem solver but are unreliable when the conditioning information is fallible. While emergent self-reflection indicates that models can sometimes notice issues and recover, current training pipelines do not target the capability we ultimately care about in noisy settings: \textbf{detect errors, repair them, and reach the correct answer}.
% \vspace{-5pt}

In this work, we ask: \textbf{Can we convert a strong-but-brittle LRM into a robust reasoner by training robustness to fallible conditioning context using only verifiable outcome rewards?} We focus on a regime where base task competence is already present (e.g., from RLVR or instruction tuning) and train a reasoning model robust to corrupted context. To do this without human annotations or external teacher models, we develop an \textbf{adversarial self-play} framework \citep{cheng2024self} for robust reasoning. We instantiate two role-conditioned behaviors over the same underlying model: a \textbf{polluter} that learns to introduce subtle corruptions of the conditioning context that maximize downstream failure, and an \textbf{agent} that learns to diagnose and neutralize these corruptions while preserving final-answer correctness. Both roles are optimized with a GRPO-style objective using only verifiable terminal rewards. Because the polluter trains against the current agent, self-play induces an adaptive curriculum: as the agent becomes harder to fool, the polluter must discover increasingly effective corruption patterns, and the agent correspondingly learns stronger verification and repair strategies.
Crucially, the agent is trained under the same corrupted conditioning used at test time, so robustness arises from learning rather than from specialized test-time prompting.
% \vspace{-4pt}

A central challenge is that recovery trajectories are initially rare: under fallible context, outcome-only policy optimization often sees batches with no successes, producing uninformative updates. To address this, we introduce \textbf{in-distribution repair guidance}: a lightweight imitation term on \emph{self-generated} repair snippets that are high-likelihood under the current policy. This increases early recovery rates and makes outcome-based RL updates informative without relying on off-distribution teacher fixes. Empirically, in-distribution guidance both accelerates recovery learning and better preserves previously acquired capabilities (smaller representation drift); we further isolate this mechanism with a minimal navigation analogue analysis (\S\ref{sec:maze_analogue}).

We call our training procedure \textbf{\textsc{GASP}} (\textbf{\underline{G}}uided \textbf{\underline{A}}dversarial \textbf{\underline{S}}elf-\textbf{\underline{P}}lay). Across four open-weight reasoning models (1.5B--8B), \textsc{GASP} transforms strong-but-brittle reasoners into robust ones: it improves \emph{diagnosability}, \emph{recoverability}, and \emph{reliability} under natural lexical/syntactic/semantic perturbations---while also strengthening \emph{self-reflection} (self-revision from its own incorrect solutions) and often improving \emph{clean} accuracy via more cautious reasoning.
Our contributions are:
% \vspace{-6pt}
\begin{itemize}[leftmargin=*,noitemsep]
    \vspace{-8pt}
    \item \textbf{Robust reasoning under fallible context.}
    We formalize robustness to unreliable conditioning context and unify three stress-tests: diagnosability, recoverability, and perturbation reliability.
    % \vspace{-8pt}
    \item \textbf{Adversarial self-play with verifiable rewards.}
    We introduce a two-role game where a polluter learns locally coherent corruptions that induce failure and an agent learns to detect-and-repair, using only terminal correctness rewards.
    % \vspace{-8pt}
    \item \textbf{In-distribution repair guidance.}
    We add a lightweight imitation term on self-generated repairs that increases early recovery rates.
    % \vspace{-8pt}
    \item \textbf{Empirical gains and analysis.}
    We show consistent robustness improvements across models, and find \textsc{GASP} also increases self-revision success and can improve clean accuracy; a navigation analogue and representation analyses suggest in-distribution guidance accelerates recovery learning under sparse outcome rewards while better preserving previously acquired capabilities.
\end{itemize}
% \vspace{-20pt}
\section{Preliminaries}
% \vspace{-8pt}
A problem instance is a question $q$ with a ground-truth final answer $a^\star$. A reasoning model $\pi_\theta$ generates a token sequence consisting of a chain-of-thought $c=(c_1,\dots,c_T)$ followed by a final answer $a$; we write a full generation as a trajectory $\tau=(c,a)\sim \pi_\theta(\cdot\mid q)$. For any ratio $\alpha\in[0,1]$, let $c_{0:\alpha}$ denote the prefix containing the first $\alpha$ fraction of tokens in $c$. \textbf{Verifiable Rewards}: all RL signals assume the existence of, and depend only on, a final-answer verifier using an outcome reward
$
    r(\tau; q,a^\star) \;=\; \mathbb{I}\{a=a^\star\}.
    % \label{verifiableReward}
$
\textbf{RL with verifiable rewards}:
% \vspace{-0.3cm}
We view text generation as an episodic decision-making process where the model emits tokens autoregressively until termination. The objective is to maximize expected terminal correctness:
$
J(\theta) \;=\; \mathbb{E}_{(q,a^\star)\sim \mathcal{D},\,\tau\sim \pi_\theta(\cdot\mid q)}\big[r(\tau;q,a^\star)\big].
$

% optionally with a KL regularizer to a reference policy.
\subsection{Group Relative Policy Optimization (GRPO)}
%\vspace{-8pt}
For a generic conditioning context $x$ (e.g., a question $q$, a polluted steer $s^{\mathrm{poll}}$, or a clean window context for the polluter), GRPO \citep{shao2024deepseekmath} samples a group of trajectories
$\{\tau_i\}_{i=1}^G \sim \pi_{\theta_{\mathrm{old}}}(\cdot \mid x)$
and assigns each trajectory a terminal score
$
R_i = \mathcal{R}(\tau_i; x),
$
where $\mathcal{R}$ is any task-specific terminal reward (e.g., verifiable correctness). Let
$\bar R=\mathrm{mean}(\mathbf{R})$ and $\sigma_R=\mathrm{std}(\mathbf{R})$ for $\mathbf{R}=\{R_1,\dots,R_G\}$.
We form group-relative advantages
$
\hat A_i(\mathcal{R}) = \frac{R_i-\bar R}{\sigma_R+\epsilon},
$
and broadcast them to all tokens:
$
\hat A_{i,t}(\mathcal{R})=\hat A_i(\mathcal{R}),\ \forall t\in\{1,\dots,|o_i|\}.
$
The GRPO objective parameterized by $\mathcal{R}$ is
% \vspace{-8pt}
\begin{align}
J_{\mathrm{GRPO}}&(\theta;\mathcal{R})
=\mathbb{E}\Bigg[
\frac{1}{G}\sum_{i=1}^G \frac{1}{|o_i|}\sum_{t=1}^{|o_i|}
\min\Big(
\rho_{i,t}(\theta)\hat A_{i,t}(\mathcal{R}), \nonumber\\
&\mathrm{clip}\big(\rho_{i,t}(\theta),1-\varepsilon,1+\varepsilon\big)\hat A_{i,t}(\mathcal{R})
\Big)
\Bigg].
\label{grpoObjective}
\end{align}
% \vspace{-20pt}
Optionally, we add a KL regularizer to a reference policy.
\section{Method}
\label{sec:method}
% \vspace{-8pt}
\subsection{\mbox{Overview: adversarial self-play for robust reasoning}}
\label{sec:overview}
% \vspace{-8pt}
Our goal is to convert a strong-but-brittle reasoner into a robust one that stays correct even when the conditioning context is unreliable. We operationalize robustness as \emph{diagnosability} (identify errors), \emph{recoverability} (repair and solve), and \emph{reliability} (resist perturbations). \textbf{GASP} trains these behaviors via adversarial self-play and outcome verification, rather than test-time prompting. Empirically, we find that models exhibiting stronger ``aha-moment'' behaviors—e.g., explicit uncertainty, checking, and revision—tend to be substantially more robust in these settings. Motivated by this connection, we target a concrete capability: \emph{given a partially corrupted chain-of-thought, produce the correct final answer}.
% \vspace{-6pt}

Notably, this capability is not entirely absent from current systems. Inference-time–scaled models trained with RLVR sometimes exhibit spontaneous self-correction and revision behaviors as an \emph{incidental byproduct} of optimizing for final-answer correctness. 
% However, these behaviors are neither explicitly trained nor reliably triggered: it remains unclear when they arise, what kinds of inconsistencies they can resolve, and how to systematically strengthen them. 
However, since these behaviors are not explicitly trained for, 
% As a result, 
% models that perform near-perfectly under clean solo reasoning 
models can still fail catastrophically when confronted with locally misleading intermediate context \citep{li2025off}.

We seek to train this capability using only verifiable outcome rewards, without human step-level labels and without any external editor/teacher model at training time. To this end, we formulate learning as a two-role adversarial self-play game over a shared trajectory. We build self-play episodes from questions the current agent solves \emph{reliably}, and sample a correct trace to serve as \emph{clean context}. This decouples robustness learning from competence acquisition: after we replace a local window with a coherent corruption, incorrect answers primarily reflect susceptibility to misleading conditioning rather than inability to solve $q$, details in \S\ref{sec:exp_setup}. A \textbf{polluter} then proposes a \emph{maliciously edited} version of this local window that is intended to induce downstream failure while remaining syntactically coherent. Conditioned on the resulting polluted context, the \textbf{agent} must {diagnose the inconsistency, repair the trajectory, and reach the correct final answer} under the same conditioning used at test time. 

% \vspace{-10pt}
\subsection{Constructing off-trajectory corruptions}
\label{sec:construction}
% \vspace{-8pt}
For each problem $(q,a^\star)$ that the agent solves reliably under solo reasoning, we first sample a correct trajectory
$
\tau = (c,a^\star) \sim \pi_\theta(\cdot \mid q),
$
where
$
c = (c_1,\dots,c_T).
$
We select a truncation ratio
$
\alpha \in \{0,\,0.25,\,0.5,\,0.75\},
$
and let $c_{0:\alpha}$ denote the prefix containing the first $\alpha$ fraction of tokens in $c$. This prefix represents the shared reasoning history that will be treated as fixed context. Following the truncation point, we extract a short \emph{clean window} $w^{\mathrm{clean}}$, consisting of the next $\approx 10\%$ of tokens from the same correct trajectory. This window serves as a local, ground-truth continuation that is internally consistent with both the problem and the prefix.

To construct an off-trajectory corruption, the clean window is replaced by a corrupted variant $w^{\mathrm{poll}}$ that is locally coherent in style but contains at least one misleading reasoning step and/or perturbed text. The resulting \emph{polluted steer} is
\[
s^{\mathrm{poll}} = \big(q,\; c_{0:\alpha},\; w^{\mathrm{poll}}\big),
\]
which removes the original clean continuation and exposes the agent to the corrupted context as shown in \cref{fig:open_2}.
% \vspace{-10pt}
% \subsection{Agent training objective: recover under the polluted steer}
% \label{sec:agent}
% % \vspace{-8pt}

% Conditioned on $s^{\mathrm{poll}}$, the agent produces a completion $\tau_{\mathrm{off}}\sim \pi_\theta(\cdot \mid s^{\mathrm{poll}})$ and receives only the verifiable terminal reward. % \eqref{verifiableReward}. 
% We optimize the agent with GRPO using \eqref{grpoObjective} under this off-trajectory conditioning,
% \vspace{-10pt}
% \[
% J_{\mathrm{GRPO}}^{\mathrm{rec}}(\theta)
% = 
% \mathbb{E} 
% % \Bigg],
% \Bigg[
% \frac{1}{G}\sum_{i=1}^G \frac{1}{|o_i|}\sum_{t=1}^{|o_i|}
% \min\!\Big(\rho_{i,t}(\theta)\hat A_{i,t},\; 
% \]
% \vspace{-12pt}
% \[
% \mathrm{clip}(\rho_{i,t}(\theta),1-\varepsilon,1+\varepsilon)\hat A_{i,t}\Big)
% \Bigg],
% \]

% where advantages $\hat A_{i,t}$ are computed from the group-relative normalization of terminal rewards. As we show next, optimizing this objective alone is typically insufficient when polluted contexts induce low-probability recovery behaviors.

\subsection{Agent training objective: recover under the polluted steer}
\label{sec:agent}
% \vspace{-8pt}

Conditioned on $s^{\mathrm{poll}}$, the agent produces an off-trajectory completion
$\tau_{\mathrm{off}}\sim \pi_\theta(\cdot \mid s^{\mathrm{poll}})$
and receives only a terminal verifiable reward. We therefore optimize the agent by instantiating the generic GRPO objective
$J_{\mathrm{GRPO}}(\theta;\mathcal{R})$ from \eqref{grpoObjective} with the recoverability reward
$\mathcal{R}^{\mathrm{rec}}(\tau_{\mathrm{off}}; s^{\mathrm{poll}}, a^\star)=\mathbb{I}\{a_{\mathrm{off}} = a^\star\}$ i.e.,
\[
J_{\mathrm{GRPO}}^{\mathrm{rec}}(\theta)
\;=\;
J_{\mathrm{GRPO}}(\theta;\mathcal{R}^{\mathrm{rec}}),
\]
where the group-relative advantages $\hat A_{i,t}(\mathcal{R}^{\mathrm{rec}})$ are computed from the terminal rewards
$\mathcal{R}^{\mathrm{rec}}(\tau_i; s^{\mathrm{poll}}, a^\star)$ within each sampled group. As we show next, optimizing outcome-only $J_{\mathrm{GRPO}}^{\mathrm{rec}}$ is typically insufficient when polluted contexts make recovery trajectories low probability.

\subsection{Repair guidance under learning-signal scarcity}
\label{sec:guidance}

\paragraph{Motivation: outcome-only learning is signal-scarce under pollution.}
Under a polluted steer $s^{\mathrm{poll}}$, the agent initially almost never recovers: outputs typically follow the misleading window and terminate with an incorrect answer. This creates a \emph{learning-signal scarcity} problem for outcome-only GRPO, since group-relative advantages are informative only when % at least
some sampled trajectories succeed. 
Increasing the group size $G$ raises the chance of observing a %rare 
recovery but is computationally expensive and does not change how rarely recovery behaviors are sampled; see \S\ref{app:learning_signal_scarcity}. This motivates a complementary mechanism that directly increases the probability of sampling successful recoveries under $s^{\mathrm{poll}}$, so informative groups arise even with moderate $G$.

\paragraph{Guided objective.}
To increase early recovery frequency without increasing $G$, we augment outcome-only GRPO with an auxiliary \emph{repair guidance} term that upweights short ``repair'' continuations back to a successful trajectory:
\[
J_{\mathrm{G\text{-}GRPO}}^{\mathrm{rec}}(\theta)
\;=\;
J_{\mathrm{GRPO}}^{\mathrm{rec}}(\theta)
\;+\;
\lambda\, J_{\mathrm{guide}}(\theta).
\]
Here $J_{\mathrm{GRPO}}^{\mathrm{rec}}$ is instantiated as in \S\ref{sec:agent}, and $J_{\mathrm{guide}}$ is an imitation-style objective defined on repair snippets under the \emph{deployment} conditioning $s^{\mathrm{poll}}$.

\paragraph{Teacher-guided repairs can be off-distribution.}
A straightforward approach uses a strong external teacher policy $\pi_{\mathrm{T}}$ to generate a repair snippet
$w^{\mathrm{fix}}_{\mathrm{T}} \sim \pi_{\mathrm{T}}(\cdot \mid p)$
from a diagnosis prompt $p$ that exposes the inconsistency (e.g., including both $w^{\mathrm{clean}}$ and $w^{\mathrm{poll}}$), and then behavior-clones this snippet under the deployment conditioning $s^{\mathrm{poll}}$.
Empirically, teacher fixes can be semantically correct yet inefficient as learning signals because they often lie in low-probability regions of the student policy; see \S\ref{app:ood_repair_guidance}.

\paragraph{Advantage of in-distribution guidance.}
For a generic policy gradient on a positive sample $w^{\mathrm{fix}}$ under the deployment conditioning $s^{\mathrm{poll}}$, we have 
$
g(w^{\mathrm{fix}})=\omega \nabla_\theta \log \pi_\theta(w^{\mathrm{fix}}\mid s^{\mathrm{poll}})
$
with weight $\omega>0$.
A first-order improvement satisfies
\vspace{-5pt}
\begin{align*}
\Delta J
&\;\approx\;
\alpha\, \nabla_\theta J(\theta)^\top g(w^{\mathrm{fix}})
\\
&\;\approx\;
\alpha\,\omega\,\pi_\theta(w^{\mathrm{fix}}\mid s^{\mathrm{poll}})\,
\big\|\nabla_\theta \log \pi_\theta(w^{\mathrm{fix}}\mid s^{\mathrm{poll}})\big\|_2^2,
\end{align*}
where the dominant term scales linearly with $\pi_\theta(w^{\mathrm{fix}}\mid s^{\mathrm{poll}})$ (full proof in \S\ref{app:proof_guidance_scaling}). Therefore, for candidate repairs $w_\theta^{\mathrm{fix}},w_{\mathrm{T}}^{\mathrm{fix}}$ with comparable weights and gradient norms,
\vspace{-5pt}
\[
\pi_\theta(w_\theta^{\mathrm{fix}}\mid s^{\mathrm{poll}})\;\gg\;\pi_\theta(w_{\mathrm{T}}^{\mathrm{fix}}\mid s^{\mathrm{poll}})
\]
\vspace{-18pt}
\[
\quad\Longrightarrow\quad
|\Delta J(w_\theta^{\mathrm{fix}})|\;\gg\;|\Delta J(w_{\mathrm{T}}^{\mathrm{fix}})|.
\]

\begin{figure}[h]
    \vspace{-10pt}
    \centering
    \includegraphics[width=0.5\textwidth]{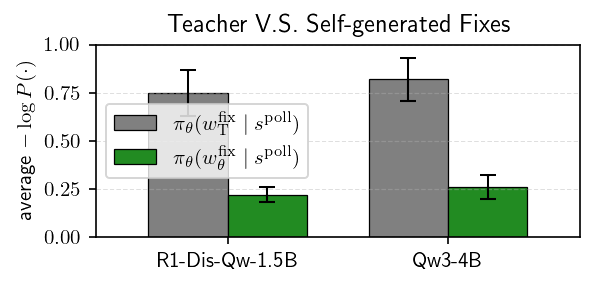}
    \vspace{-22pt}
    \caption{For fixes with similar lengths, teacher (GPT-5) \citep{singh2025openai} generated fixes have higher negative log-likelihood than self-generated fixes, indicating that the model assigns lower probability to teacher fixes than to its own fixes.}
    \label{fig:likelihood}
    \vspace{-15pt}
\end{figure}

\paragraph{Self-generated in-distribution repair targets.}
We construct a diagnosis prompt $p_\theta^{\mathrm{fix}}$ that includes $(q,c_{0:\alpha})$ and contrasts $(w^{\mathrm{clean}},w^{\mathrm{poll}})$, and sample an on-policy repair
$
w_\theta^{\mathrm{fix}} \sim \pi_\theta(\cdot\mid p_\theta^{\mathrm{fix}}).
$
We then use $w_\theta^{\mathrm{fix}}$ as an imitation target under the deployment conditioning
$s^{\mathrm{poll}}=(q,c_{0:\alpha},w^{\mathrm{poll}})$
by upweighting $\log \pi_\theta(w_\theta^{\mathrm{fix}}\mid s^{\mathrm{poll}})$.
By construction, this tends to yield repairs with substantially higher likelihood under deployment than teacher fixes:
$
\pi_\theta(w^{\mathrm{fix}}_{\theta}\mid s^{\mathrm{poll}})\gg \pi_\theta(w^{\mathrm{fix}}_{\mathrm{T}}\mid s^{\mathrm{poll}}),
$
so the same-sized update yields a larger expected improvement, leading to better learning efficiency (verified in \S\ref{sec:experiments}).

\paragraph{Representation preservation.}
Beyond immediate first-order gain, in-distribution repair targets tend to better preserve internal representations. Since self-generated repairs $w^{\mathrm{fix}}_\theta$ already have non-negligible likelihood under $s^{\mathrm{poll}}$ (Figure~\ref{fig:likelihood}), increasing $\log \pi_\theta(w^{\mathrm{fix}}_\theta \mid s^{\mathrm{poll}})$ typically requires only less parameter change \citep{zhu2025path}. By contrast, cloning an off-distribution teacher fix with $\pi_\theta(w^{\mathrm{fix}}_{\mathrm{T}} \mid s^{\mathrm{poll}})\approx 0$ demands a much larger likelihood-ratio adjustment, which can induce larger gradient bursts and greater latent drift in SFT-like updates \citep{huan2025does}. We verify this stability/retention effect in \S\ref{sec:experiments}.

% \vspace{-10pt}
\subsection{Adversarial self-play: optimizing the polluter}\label{sec:polluter}
% \vspace{-8pt}
So far, $w^{\mathrm{poll}}$ was an externally constructed corruption. We now generate $w^{\mathrm{poll}}$ online in a self-play loop. For each reliably solved problem $(q,a^\star)$, we first sample a clean trajectory $\tau=(c,a^\star)\sim\pi_\theta(\cdot\mid q)$, choose $\alpha$, and extract the clean window $w^{\mathrm{clean}}$ as in Section~\ref{sec:construction}. We then \emph{prompt the same model} to act as a \emph{polluter}: conditioned on the local clean context $(q,c_{0:\alpha},w^{\mathrm{clean}})$, it outputs a corrupted window
$
w^{\mathrm{poll}} \sim \pi_\theta(\cdot \mid q, c_{0:\alpha}, w^{\mathrm{clean}}; \mathrm{polluter\_prompt}),
$
% \vspace{-13pt}
which defines the polluted steer $s^{\mathrm{poll}}=(q,c_{0:\alpha},w^{\mathrm{poll}})$.
\paragraph{Polluter reward.}
After sampling $w^{\mathrm{poll}}$, we run the \emph{agent} under the polluted steer and evaluate the same verifiable correctness reward:
\[
\tau_{\mathrm{off}} \sim \pi_\theta(\cdot \mid s^{\mathrm{poll}}),
\qquad
r_{\mathrm{solve}}=\mathbb{I}\{a_{\mathrm{off}}=a^\star\}.
\]
The polluter receives reward when its corruption causes failure:
$
R^{\mathrm{poll}} \;=\; 1 - r_{\mathrm{solve}}.
$
% \vspace{-10pt}
\paragraph{Polluter objective.}
We optimize the polluter by instantiating the generic GRPO objective $J_{\mathrm{GRPO}}(\theta;\mathcal{R})$ from \eqref{grpoObjective} on the polluted-window trajectories.
For each clean context $x^{\mathrm{poll}}=(q,c_{0:\alpha},w^{\mathrm{clean}})$, the polluter samples a window
$w^{\mathrm{poll}}\sim \pi_\theta(\cdot \mid x^{\mathrm{poll}};\mathrm{polluter\_prompt})$,
which induces the polluted steer $s^{\mathrm{poll}}=(q,c_{0:\alpha},w^{\mathrm{poll}})$.
After rolling out the agent under $s^{\mathrm{poll}}$, we assign the polluter the terminal reward
$
\mathcal{R}^{\mathrm{poll}}(w^{\mathrm{poll}}; x^{\mathrm{poll}}, a^\star)
=
1-\mathbb{I}\{a_{\mathrm{off}}=a^\star\},
\tau_{\mathrm{off}}\sim \pi_\theta(\cdot\mid s^{\mathrm{poll}}).
$
We then update the polluter with
\[
J_{\mathrm{GRPO}}^{\mathrm{poll}}(\theta)
\;=\;
J_{\mathrm{GRPO}}(\theta;\mathcal{R}^{\mathrm{poll}}),
\]
where the group-relative advantages $\hat A_{i,t}(\mathcal{R}^{\mathrm{poll}})$ are computed from the rewards
$\mathcal{R}^{\mathrm{poll}}(w^{\mathrm{poll}}_i; x^{\mathrm{poll}}, a^\star)$
over a group of $G_{\mathrm{poll}}$ sampled windows for the same $x^{\mathrm{poll}}$.

% \vspace{-10pt}
\paragraph{Overall self-play updates (\textsc{GASP}).}
Training alternates two GRPO-style updates using the same parameters $\theta$: (i) an \emph{agent} update that maximizes recoverability under polluted context (GRPO plus the in-distribution repair guidance term from Section~\ref{sec:guidance}), and (ii) a \emph{polluter} update that maximizes $J_{\mathrm{GRPO}}^{\mathrm{poll}}$. Concretely, we alternate ascent steps
\vspace{-5pt}
\[
\theta \leftarrow \theta + \eta_{\mathrm{rec}} \nabla_\theta J_{\mathrm{G-GRPO}}^{\mathrm{agent}}(\theta),
\hspace{0.5em}
\theta \leftarrow \theta + \eta_{\mathrm{poll}} \nabla_\theta J_{\mathrm{GRPO}}^{\mathrm{poll}}(\theta),
\]
where $J_{\mathrm{G-GRPO}}^{\mathrm{agent}}(\theta)$ denotes the guided objective (e.g., $J_{\mathrm{GRPO}}^{\mathrm{rec}}(\theta)+\lambda J_{\mathrm{guide}}(\theta)$). When optimizing the polluter, we treat $r_{\mathrm{solve}}$ as a black-box outcome of the agent rollout and do not backpropagate through $\tau_{\mathrm{off}}$. This self-play induces an adaptive curriculum: as the agent becomes harder to fool, the polluter must discover increasingly effective local corruptions, and the agent correspondingly learns stronger error-detection and repair behaviors under the same deployment conditioning $s^{\mathrm{poll}}$.

\newcommand{\meanstd}[2]{\ensuremath{#1 \pm #2}}
\newcommand{\score}[1]{\ensuremath{#1}}
\newcommand{\na}{\textemdash}

\begin{table*}[t]
\centering
\footnotesize
\setlength{\tabcolsep}{2.6pt}
\renewcommand{\arraystretch}{0.82}
\resizebox{\textwidth}{!}{%
\begin{tabular}{ll c c c c c c c c}
\toprule
& &
\multicolumn{1}{c}{\textbf{Clean Accuracy}} &
\multicolumn{1}{c}{\textbf{Recoverability}} &
\multicolumn{2}{c}{\textbf{Diagnosability}} &
\multicolumn{2}{c}{\textbf{Reliability}} &
\multicolumn{1}{c}{\textbf{Self-Revision}} &
\multicolumn{1}{c}{\textbf{Avg}} \\
\cmidrule(lr){3-3}
\cmidrule(lr){4-4}
\cmidrule(lr){5-6}
\cmidrule(lr){7-8}
\cmidrule(lr){9-9}
\cmidrule(lr){10-10}

\textbf{Model} & \textbf{Method} &
{GSM8K} &
{GSM8K} &
{MR-GSM8K} & {MHPP} &
{GSM8K} & {CQA} &
{$\Delta$ GSM8K} &
{Avg} \\
\midrule

\multirow{4}{*}{R1-Distill-Qwen-1.5B}
& Initial ckpt        & \score{58.7} & \score{12.4}  & \score{12.1}  & \score{8.4}  & \score{24.3} & \score{16.2} & \score{0.2} & 18.9 \\
& \textsc{ASP}        & \meanstd{60.2}{1.6} & \meanstd{34.4}{4.5} & \meanstd{15.8}{2.1} & \meanstd{11.2}{1.8} & \meanstd{38.7}{2.3} & \meanstd{19.5}{2.1} & \meanstd{2.1}{0.3} & 26.0 \\
& \textsc{GASP-T}     & \meanstd{65.8}{1.7} & \meanstd{52.7}{4.2} & \meanstd{24.1}{2.2} & \meanstd{17.3}{1.9} & \meanstd{57.5}{2.7} & \meanstd{28.8}{2.3} & \meanstd{2.8}{0.4} & 35.6 \\
\rowcolor{GASPshade}\cellcolor{white}\strut
& \textsc{GASP}       & \meanstd{70.3}{2.4} & \meanstd{95.6}{2.2} & \meanstd{87.4}{3.1} & \meanstd{82.1}{3.5} & \meanstd{91.2}{2.8} & \meanstd{91.7}{2.9} & \meanstd{6.1}{0.5} & \textbf{74.9} \\
\midrule

\multirow{4}{*}{DeepScaleR-1.5B}
& Initial ckpt        & \score{67.9} & \score{14.3} & \score{13.8}  & \score{9.1}  & \score{25.6} & \score{17.4} & \score{0.2} & 21.2 \\
& \textsc{ASP}        & \meanstd{69.2}{1.2} & \meanstd{36.7}{5.1} & \meanstd{18.3}{2.4} & \meanstd{13.5}{2.0} & \meanstd{41.4}{2.5} & \meanstd{22.1}{2.3} & \meanstd{1.9}{0.3} & 29.0 \\
& \textsc{GASP-T}     & \meanstd{70.8}{1.4} & \meanstd{56.0}{4.6} & \meanstd{27.8}{2.6} & \meanstd{20.6}{2.1} & \meanstd{61.8}{2.9} & \meanstd{33.1}{2.5} & \meanstd{2.6}{0.4} & 39.1 \\
\rowcolor{GASPshade}\cellcolor{white}\strut
& \textsc{GASP}       & \meanstd{72.4}{2.1} & \meanstd{98.6}{1.1} & \meanstd{90.2}{2.7} & \meanstd{85.8}{3.2} & \meanstd{94.3}{2.1} & \meanstd{92.6}{2.4} & \meanstd{8.2}{0.6} & \textbf{77.4} \\
\midrule

\multirow{4}{*}{Qwen3-4B}
& Initial ckpt        & \score{77.2} & \score{13.8}  & \score{15.4} & \score{11.2}  & \score{27.9} & \score{19.8} & \score{2.3} & 23.9 \\
& \textsc{ASP}        & \meanstd{79.2}{1.1} & \meanstd{40.5}{5.2} & \meanstd{22.7}{2.6} & \meanstd{17.9}{2.3} & \meanstd{45.6}{2.8} & \meanstd{26.3}{2.5} & \meanstd{4.6}{0.5} & 33.8 \\
& \textsc{GASP-T}     & \meanstd{81.4}{1.2} & \meanstd{61.4}{4.0} & \meanstd{34.9}{2.7} & \meanstd{27.4}{2.4} & \meanstd{67.5}{2.6} & \meanstd{39.1}{2.7} & \meanstd{6.3}{0.6} & 45.4 \\
\rowcolor{GASPshade}\cellcolor{white}\strut
& \textsc{GASP}       & \meanstd{82.9}{1.9} & \meanstd{98.2}{1.2} & \meanstd{92.8}{2.3} & \meanstd{88.5}{2.9} & \meanstd{96.1}{1.8} & \meanstd{94.7}{2.1} & \meanstd{9.3}{0.7} & \textbf{80.4} \\
\midrule

\multirow{4}{*}{Qwen3-8B (lora)}
& Initial ckpt            & \score{84.9} & \score{14.7}  & \score{18.2} & \score{13.5}  & \score{33.3} & \score{22.1} & \score{4.2} & 27.3 \\
& \textsc{ASP}            & \meanstd{87.3}{1.5} & \meanstd{44.4}{2.1} & \meanstd{26.9}{2.8} & \meanstd{21.4}{2.5} & \meanstd{59.8}{2.9} & \meanstd{30.2}{2.7} & \meanstd{6.4}{0.6} & 39.5 \\
& \textsc{GASP-T}  & \meanstd{89.9}{1.6} & \meanstd{66.9}{2.4} & \meanstd{40.7}{2.9} & \meanstd{32.5}{2.6} & \meanstd{88.6}{2.7} & \meanstd{44.1}{2.6} & \meanstd{8.8}{0.7} & 53.2 \\
\rowcolor{GASPshade}\cellcolor{white}\strut
& \textsc{GASP}    & \meanstd{93.7}{3.2} & \meanstd{98.7}{0.8} & \meanstd{95.1}{1.9} & \meanstd{91.6}{2.4} & \meanstd{97.8}{1.1} & \meanstd{96.3}{1.6} & \meanstd{10.7}{0.8} & \textbf{83.4} \\
\bottomrule
\end{tabular}
}

\vspace{2pt}
\caption{\textbf{Main robustness results (pass@1).} Mean $\pm$ std over 3 training runs for \textsc{ASP}/\textsc{GASP-T}/\textsc{GASP}; \textbf{Initial ckpt} is a single evaluation. \textbf{Clean Accuracy}: GSM8K under clean prompting. \textbf{Recoverability}: GSM8K accuracy under corrupted context (held-out GPT-5 polluter). \textbf{Diagnosability}: \textbf{ACCreason} on MR-GSM8K and MHPP \citep{zeng2023mr,zeng2024mr}. \textbf{Reliability}: accuracy on perturbed GSM8K and CommonsenseQA from RUPBench \citep{wang2024rupbench}. \textbf{Self-Revision}: improved GSM8K accuracy on the failed questions after self-revision \citep{tie2025can}. \textbf{Avg}: unweighted mean of the seven metrics. \textsc{GASP-T} uses teacher-generated (GPT-5) repair guidance in place of self-guided in-distribution repairs. Lora \citep{hu2022lora} were used for 8B model see details in \cref{app:hyperparams}.}
\label{tab:main}
\vspace{-17pt}
\end{table*}
% \vspace{-12pt}
\section{Experiments}
\label{sec:experiments}
% \vspace{-8pt}
We evaluate \textsc{GASP} as a training procedure for turning strong-but-brittle LRMs into robust reasoners using only outcome reward without teacher/human labels. Our experiments are organized around three questions:
% \vspace{-14pt}
\begin{itemize}[leftmargin=*,noitemsep]
    \item \textbf{Q1: Robustness \textnormal{and} downstream utility.}
Does \textsc{GASP} improve \emph{diagnosability}, \emph{recoverability}, and \emph{reliability}, and do these robustness gains translate into better clean accuracy and \emph{self-revision} (post-hoc correction when prompted with the model's own incorrect solution)?
% \vspace{-14pt}
\item \textbf{Q2: Efficiency.}
Under a fixed rollout budget, does in-distribution repair guidance reduce the exploration burden for agent—producing recoveries sooner and reaching higher performance faster—than outcome-only GRPO and teacher-guided repair?
% \vspace{-14pt}
\item \textbf{Q3: Representation Drift.}
Does in-distribution guidance reduce representation drift and better preserve out-of-domain performance relative to off-distribution teacher guidance?
\end{itemize}

% \vspace{-25pt}
\subsection{Experimental setup}
\label{sec:exp_setup}
% \vspace{-8pt}
\paragraph{Models.}
We run experiments on four open-weight instruction-tuned reasoning models spanning 1.5B--8B parameters: DeepSeek-R1-Distill-Qwen-1.5B, DeepScaleR-1.5B, Qwen3-4B, and Qwen3-8B. Note, these are already strong reasoning and instruction following models, which makes them well-suited for role-conditioned self-play: under a \textsc{pollute} prompt, the same backbone can reliably generate plausible corruptions that induce failures in the \textsc{agent} role, creating a non-degenerate training game with informative adversaries.
% \vspace{-12pt}
\paragraph{Training questions and solvable subset.}
We sample 7.5k problems from the MATH training split and restrict training to instances the \emph{initial checkpoint} solves reliably: for each candidate, we draw $K{=}4$ solutions at temperature $0.7$ and keep it only if all final answers are correct. For each retained $(q,a^\star)$ during training, we sample a clean on-policy trajectory $\tau=(c,a^\star)\sim\pi_\theta(\cdot\!\mid\! q)$, choose $\alpha\in\{0,0.25,0.5,0.75\}$, and extract the subsequent clean window $w^{\mathrm{clean}}$ (the next $\approx 10\%$ of tokens, capped to a fixed length). The polluter generates $w^{\mathrm{poll}}$ online from this context, and the agent learns recovery under the polluted steer $s^{\mathrm{poll}}=(q,c_{0:\alpha},w^{\mathrm{poll}})$.
% \vspace{-12pt}
\paragraph{Prompts and role conditioning.}
Both roles are implemented by prompting the same backbone model. The \textsc{agent} uses the standard deployment system prompt \emph{unchanged} (no added “be robust” or “check for errors” instructions), while \textsc{pollute} uses a prompt with a constrained edit budget that encourages non-gibberish but misleading corruptions. Full prompts are in \S\ref{app:prompts}; reported robustness gains come from training, not test-time prompt engineering.
% \vspace{-12pt}
\paragraph{Baselines.}
In the main results table, we focus on two reference points that directly isolate the effect of our guidance mechanism: (i) the initial checkpoint (no additional training), and (ii) \textsc{GASP} trained with outcome-only GRPO under polluted steers (i.e., adversarial self-play \emph{without} any repair guidance). Additional comparisons---including GPT-5 polluter and ablations that disable polluter updates (fixed corruptions)--- are reported in \S\ref{app:baselines}. All methods are run under matched rollout budgets by controlling group sizes and maximum generation lengths.
% \vspace{-12pt}

%\paragraph{Implementation details.}
Unless otherwise stated, we use the same truncation set $\alpha\in\{0,0.25,0.5,0.75\}$, window-length cap, and final-answer verifier across all methods.
Full hyperparameters (GRPO clip, KL coefficient if used, group sizes, learning rates, and training steps) are given in \S\ref{app:hyperparams}.

% \vspace{-10pt}
\begin{figure*}[!htbp]
\centering
\begin{subfigure}[b]{0.20\linewidth}
  \centering
  \includegraphics[width=\linewidth]{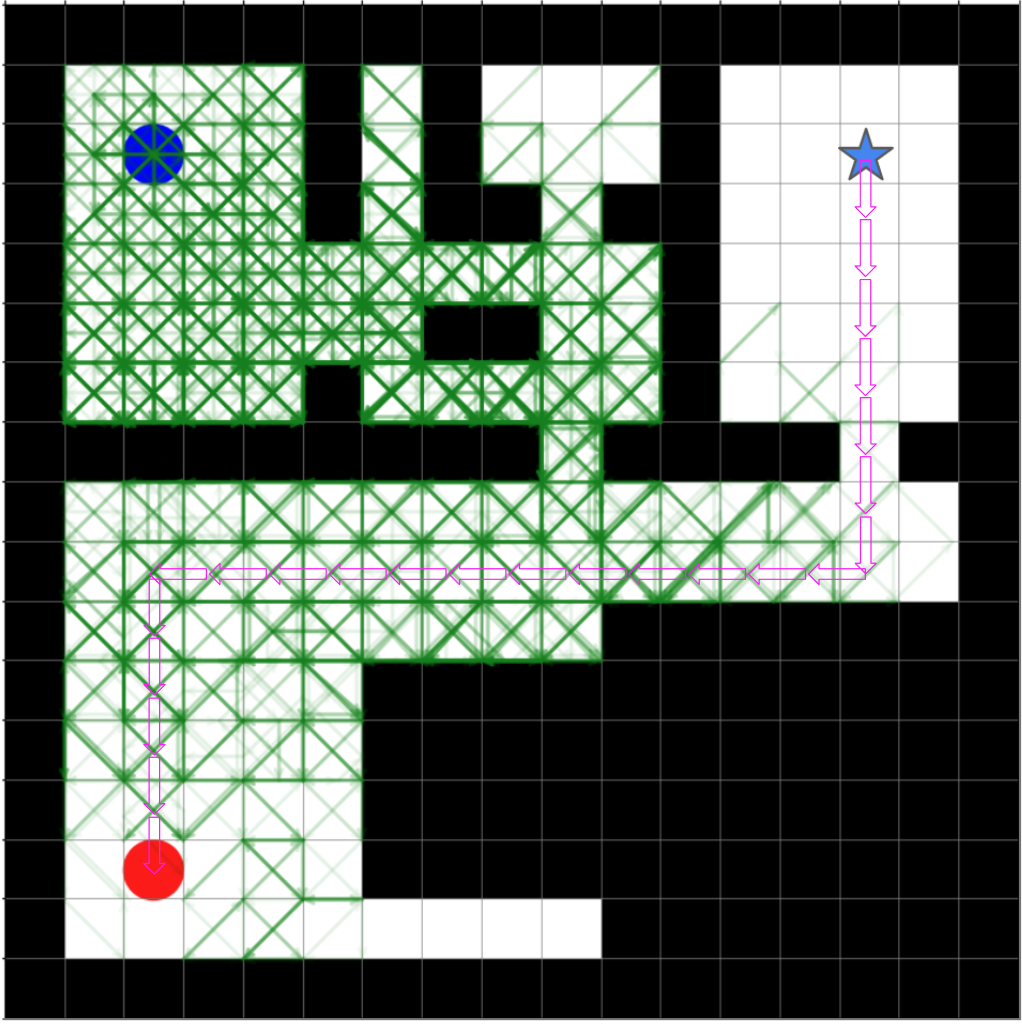}
  \caption{\scriptsize GRPO}
  \label{fig:toycase_grpo}
\end{subfigure}\hfill
\begin{subfigure}[b]{0.20\linewidth}
  \centering
  \includegraphics[width=\linewidth]{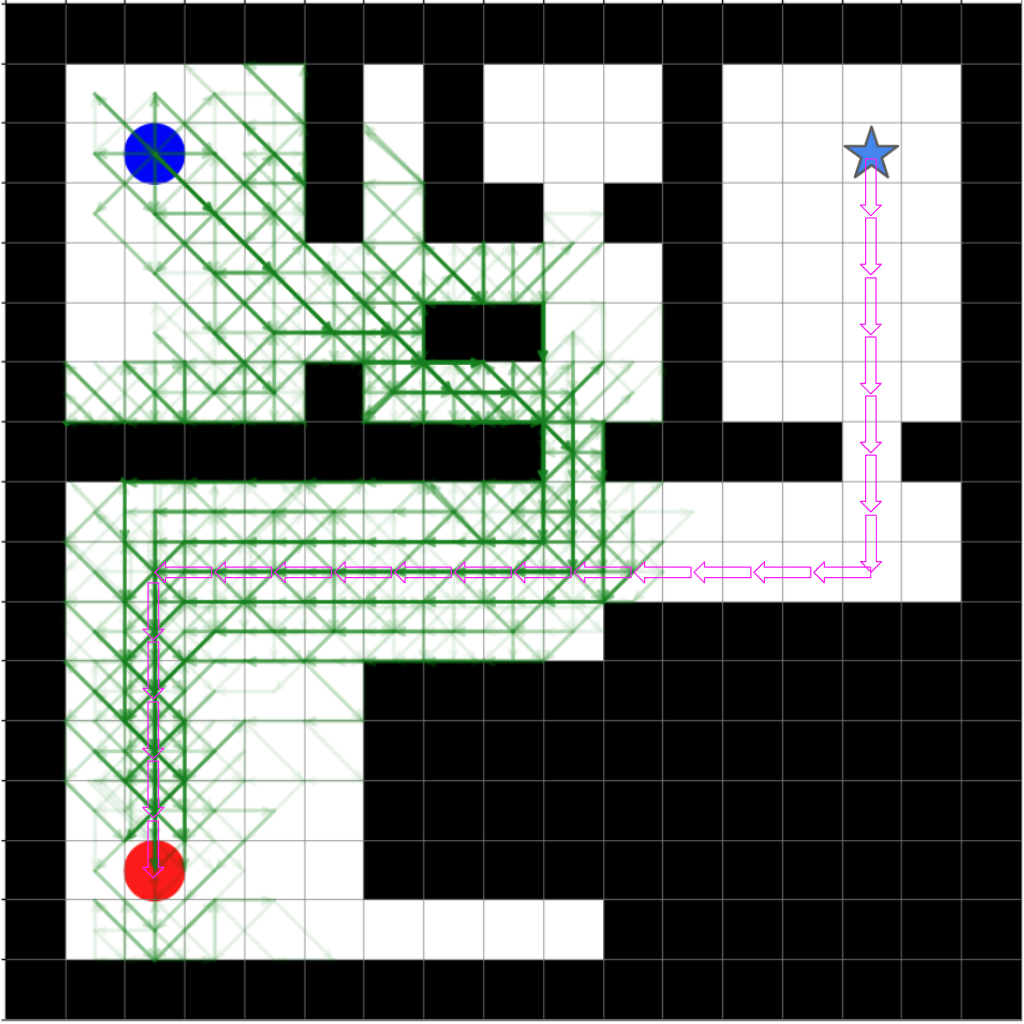}
  \caption{\scriptsize GRPO (In-Dist Guided)}
  \label{fig:toycase_g_grpo}
\end{subfigure}\hfill
\begin{subfigure}[b]{0.28\linewidth}
  \centering
  \includegraphics[width=\linewidth]{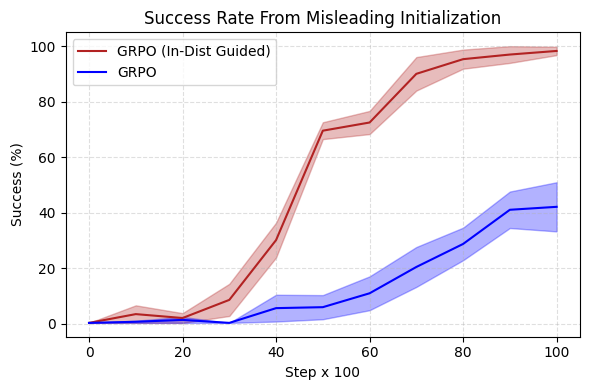}
  \caption{}
  \label{fig:mislead_maze}
\end{subfigure}\hfill
\begin{subfigure}[b]{0.28\linewidth}
  \centering
  \includegraphics[width=\linewidth]{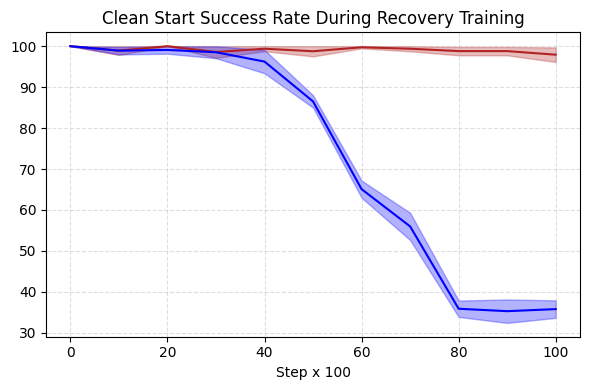}
  \caption{}
  \label{fig:clean_maze}
\end{subfigure}
\vspace{-7pt}
\caption{\textbf{Guidance accelerates off-trajectory recovery and preserves retention in a maze analogue.}
(a,b) Trajectories during recovery training from a misleading start (blue) to the goal (red); the star denotes the original clean start used to train the initial rail policy. GRPO-only explores broadly and drifts, while in-distribution guidance quickly returns to the rail and exploits prior knowledge. (c) Success rate from the misleading start; (d) clean-start success during recovery training (retention). Curves show mean $\pm$ std over 5 seeds, each evaluated with 10 rollouts per checkpoint.}
\label{fig:toycase}
 \vspace{-10pt}
\end{figure*}
\subsection{Evaluation benchmarks and metrics}
\label{sec:eval_suite}
% \vspace{-8pt}
We evaluate four complementary capabilities and summarize the main results in Table~\ref{tab:main}.
To isolate robustness to fallible context from base-task competence, we report \emph{Recoverability}, \emph{Diagnosability}, and \emph{Reliability} primarily on a \emph{clean-solved subset} of each benchmark: for every instance, we sample $K{=}4$ solutions under the standard (clean) prompt (temperature $0.7$) and retain it only if all final answers are correct. This conditioning ensures that failures reflect collapse under misleading context rather than inability to solve the underlying problem. For completeness, we also report performance on the full benchmarks in Appendix~\S\ref{app:full_benchmark_eval}.
% \vspace{-10pt}
\paragraph{Recoverability.}
We evaluate robustness to locally misleading intermediate context on GSM8K.
On the clean-solved subset, we sample a correct solution, truncate its chain-of-thought at
$\alpha\in\{0,0.25,0.5,0.75\}$, and replace the subsequent window with a locally coherent corruption generated by a held-out polluter (GPT-5).
We then measure final-answer accuracy under the resulting polluted steer.
% \vspace{-10pt}
\paragraph{Diagnosability.}
We evaluate whether the model can judge a provided solution, identify the first erroneous step, and explain the error.
On MR-GSM8K and MR-Ben's coding subset (MHPP), we follow the benchmark protocols and report \textbf{ACCreason}
(first-error-step + correct explanation) \citep{zeng2023mr,zeng2024mr}.
Exact prompting and answer extraction are in \S\ref{app:meta_reasoning}.
% \vspace{-10pt}
\paragraph{Reliability.}
We evaluate robustness to natural lexical/syntactic/semantic perturbations using the GSM8K and CommonsenseQA subsets of RUPBench \citep{wang2024rupbench}.
% \vspace{-16pt}
\paragraph{Self-revision.}
We measure whether a model can correct its own mistakes when shown its erroneous solution. On GSM8K, we first run the model once under the standard (clean) prompt and collect instances it answers incorrectly. For each such instance, we provide the original question together with the model's generated (incorrect) solution and prompt it to revise and output a corrected final answer, following \citet{tie2025can}. We report the resulting accuracy on the model's incorrect subset.

% \vspace{-10pt}
\subsection{A navigation analogue: guidance increases recovery probability and prevents drift}
\label{sec:maze_analogue}
% \vspace{-8pt}

To isolate the optimization roles of the outcome-RL term and our in-distribution repair guidance, we construct a minimal navigation analogue of guided learning. The key phenomenon we want to reproduce is: a policy that already knows an optimal solution from a clean start can nevertheless struggle to \emph{recover} once placed on a misleading suboptimal trajectory, because successful recovery rollouts are initially too rare to provide informative gradient update under outcome-only reward.
% \vspace{-15pt}

\paragraph{Environment.}
We use a deterministic grid maze with 8-connected actions (up/down/left/right and diagonals) and sparse terminal reward: an episode receives reward $1$ iff the agent reaches the target within a horizon $H$, and $0$ otherwise. Figure~\ref{fig:toycase_grpo} shows the layout. The \textbf{clean start} is the {\color{blue}\LARGE$\filledstar$} in the top-right; the \textbf{goal} is the {\color{red}\LARGE$\bullet$} in the bottom-left. The \textbf{misleading start} ({\color{blue}\LARGE$\bullet$}) lies on a suboptimal corridor that is locally plausible but requires backtracking through a narrow junction to rejoin the optimal route.
% \vspace{-10pt}

\paragraph{Two-stage protocol (``know the rail'' $\rightarrow$ ``recover from off-rail'').}
We first train a base policy $\pi_0$ with GRPO from the clean start ({\color{blue}\LARGE$\filledstar$}) to the goal ({\color{red}\LARGE$\bullet$}). This yields a policy that reliably executes a consistent \emph{rail} (pink arrows) under clean starts. We then switch the initial state distribution to the misleading start {\color{blue}\LARGE$\bullet$} and continue training from $\pi_0$ to solve the \emph{recovery} task ({\color{blue}\LARGE$\bullet$} $\rightarrow$ {\color{red}\LARGE$\bullet$}).
% \vspace{-10pt}
\paragraph{Methods compared.}
We compare two training variants under the misleading-start:
(i) \textbf{GRPO-only}, which optimizes the sparse terminal reward under the misleading start state, and (ii) \textbf{GRPO + in-distribution guidance (ours)}, which adds a lightweight imitation term on \emph{self-generated repair snippets}. Concretely, let $\mathcal{R}$ denote the set of states visited by $\pi_0$ when rolling out from the clean start, whenever an on-policy rollout reaches $\mathcal{R}$ (back to rail), we extract that trajectory segment and add it to a replay buffer $\mathcal{B}_{\mathrm{fix}}$. We then optimize an auxiliary behavioral-cloning objective on minibatches from $\mathcal{B}_{\mathrm{fix}}$:
\vspace{-14pt}
\[
J_{\mathrm{guide}}(\theta)
= \underset{\tau\sim \mathcal{B}_{\mathrm{fix}}}{\mathbb{E}}
\Big[\sum_{t=0}^{|\tau|}\log \pi_\theta(a_t\mid s_t)\Big],
\]
\vspace{-20pt}

The overall objective matches our method: $J(\theta)=J_{\mathrm{GRPO}}(\theta)+\lambda J_{\mathrm{guide}}(\theta)$. Because repair snippets are harvested from \emph{on-policy rollouts}, they are in-distribution under the current policy.
% \vspace{-10pt}
\paragraph{Metrics.}
We report (Figure~\ref{fig:mislead_maze}) \textbf{success rate} from the misleading start during training, and (Figure~\ref{fig:clean_maze}) \textbf{retention} by periodically evaluating success from the original clean start; this detects whether recovery training induces drift that damages the previously learned optimal behavior.
% \vspace{-10.25pt}
\paragraph{Results and interpretation.}
With \emph{GRPO-only} (Figure~\ref{fig:toycase_grpo}), the policy spends substantial time exploring within the misleading region before discovering the narrow rejoin junction; even after rejoining the rail, it frequently fails to leverage the previously learned route and keeps wandering—evidence of forgetting of the clean-start policy during recovery training. With \textbf{in-distribution guidance} (Figure~\ref{fig:toycase_g_grpo}), the agent rapidly learns a short ``repair'' maneuver that returns to $\mathcal{R}$ and then efficiently stitches the previously learned rail to the goal. Consistently, the learning curve in Figure~\ref{fig:mislead_maze} shows that guidance achieves high success much earlier than GRPO-only.
\begin{figure*}[!htbp]
\centering
\begin{subfigure}[b]{0.24\linewidth}
  \centering
  \includegraphics[width=\linewidth]{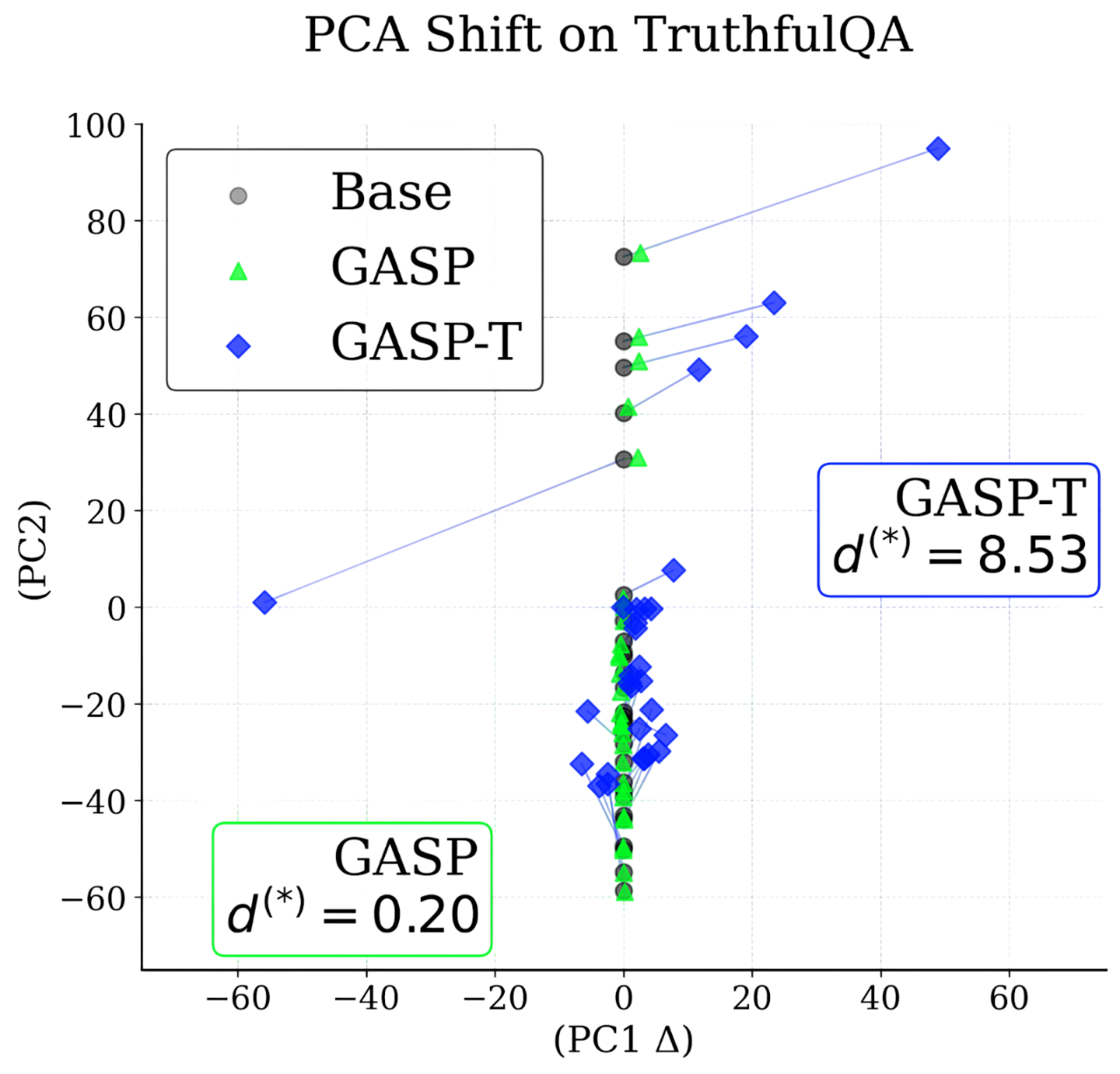}
  \vspace{-10pt}
  \caption{}
  \label{fig:toycase_grpo}
\end{subfigure}\hfill
\begin{subfigure}[b]{0.24\linewidth}
  \centering
  \includegraphics[width=\linewidth]{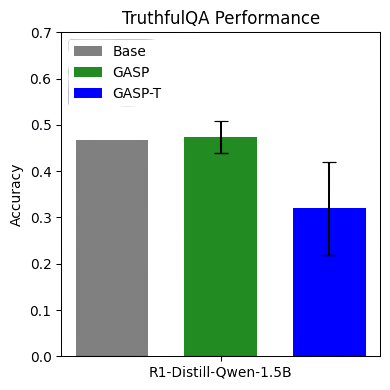}
  \vspace{-10pt}
  \caption{}
  \label{fig:toycase_g_grpo}
\end{subfigure}\hfill
\begin{subfigure}[b]{0.24\linewidth}
  \centering
  \includegraphics[width=\linewidth]{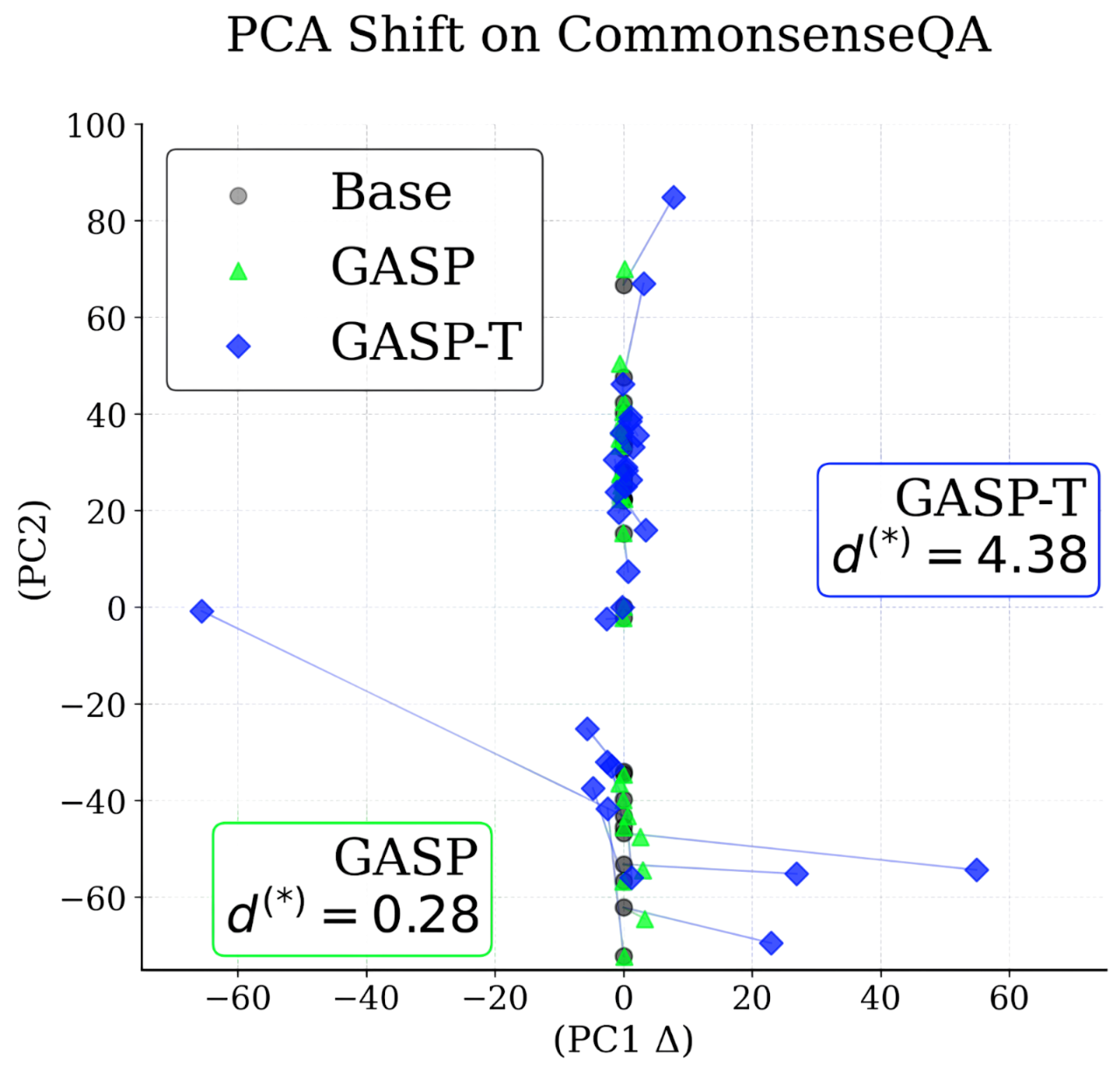}
  \vspace{-10pt}
  \caption{}
  \label{fig:mislead_maze}
\end{subfigure}\hfill
\begin{subfigure}[b]{0.24\linewidth}
  \centering
  \includegraphics[width=\linewidth]{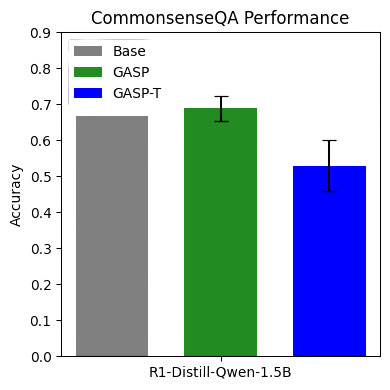}
  \vspace{-10pt}
  \caption{}
  \label{fig:clean_maze}
\end{subfigure}
%\vspace{-10pt}
\caption{\textbf{Representation drift correlates with QA drops.}
(a,c) show \emph{PCA shift} of layerwise mean hidden states on a fixed TruthfulQA / CommonsenseQA probe set. Each marker is a transformer layer $i$, plotted as $(\Delta m_{i,1},\, m_{i,2})$ after 2D PCA, where $\Delta m_{i,1}$ is the layer’s mean PC1 change from the base model (base lies on $\Delta\mathrm{PC1}{=}0$). The boxed $d^{(\ast)}$ is the Euclidean distance between base vs.\ post-training centroids in PCA space (global drift).
(b,d) show methods with larger drift (e.g., \textsc{GASP-T}) suffer larger accuracy drops, while low-drift \textsc{GASP} largely preserves accuracy.}
\label{fig:toycase}
\vspace{-20pt}
\end{figure*}
% \vspace{-5pt}

These observations mirror the core difficulty in off-trajectory recovery: when the agent is initialized in a misleading region (off the rail), the probability of reaching the goal, $p_\theta$, is initially very small. Under outcome-only GRPO with group size $G$, most update batches contain \emph{no} successful rollouts (all terminal rewards are zero), so group-relative normalization yields near-zero advantages and produces essentially no directed gradient toward the rare ``repair-then-finish'' behavior. Learning therefore reduces to blind exploration until an accidental recovery trajectory appears, which explains the delayed takeoff of GRPO-only in Figure~\ref{fig:mislead_maze}.
% \vspace{-4pt}

In-distribution repair guidance breaks this deadlock by explicitly upweighting short on-policy ``repair'' segments that rejoin the previously learned rail $\mathcal{R}$, thereby increasing the frequency of successful batches early in training. Once recoveries become common enough to appear within groups, GRPO can reliably propagate credit through the long-horizon dependency (repair $\rightarrow$ exploit prior skill $\rightarrow$ reach goal), while preserving the original clean-start behavior as reflected by the stable retention curve in Figure~\ref{fig:clean_maze}.
\begin{figure}[h]
\vspace{-15pt}
\begin{center}
\includegraphics[width=1\columnwidth]{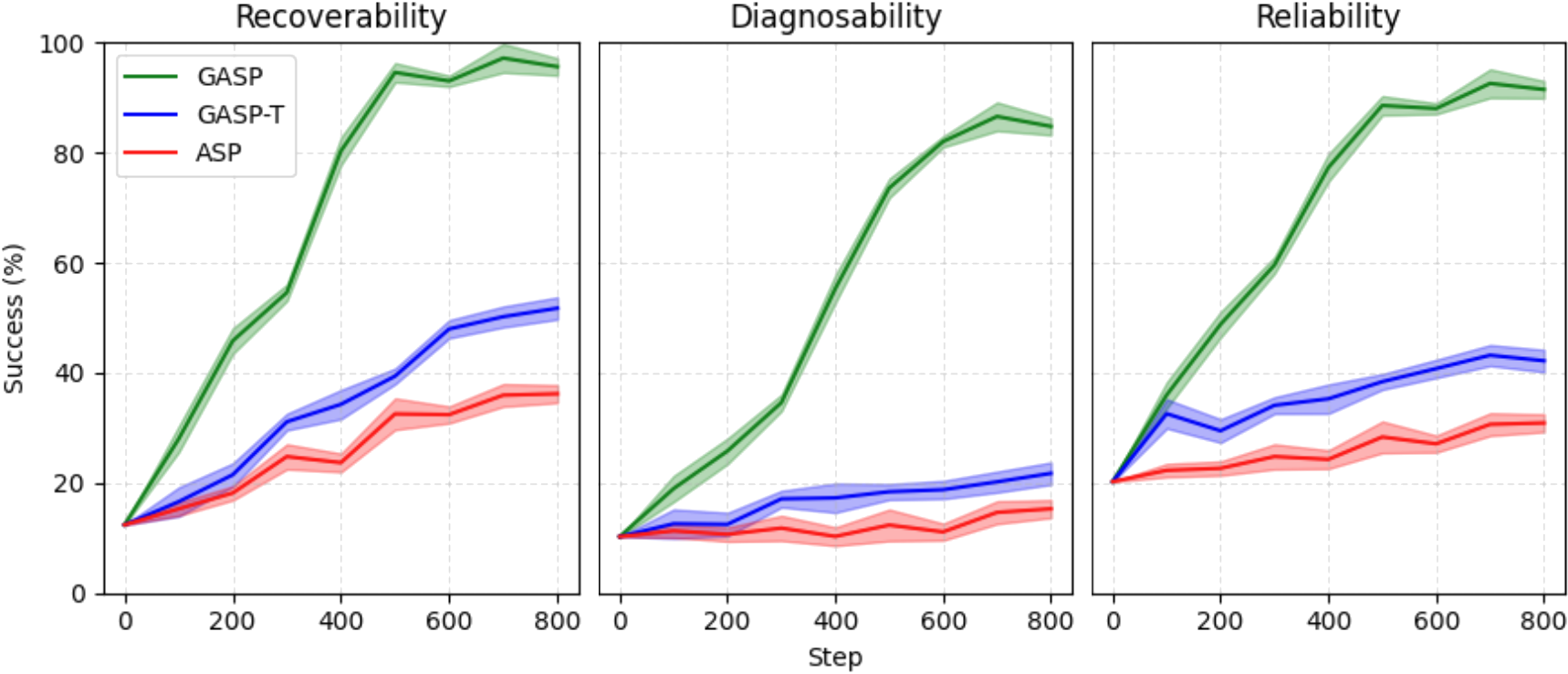}
\vspace{-20pt}
\caption{\textbf{Guidance improves learning efficiency.} Self-guided {\color{green}\textsc{GASP}} reaches substantially higher \emph{recoverability}, \emph{diagnosability}, and \emph{reliability} earlier than teacher-guided {\color{blue}\textsc{GASP-T}} and unguided self-play {\color{red}\textsc{ASP}}. Each reports the mean success rate averaged over the corresponding benchmark suite for that capability \S\ref{sec:eval_suite} at the given training step; shaded regions denote $\pm$1 std over 3 runs.}
\label{fig:efficiency}
\end{center}
\vspace{-20pt}
\end{figure}
% \vspace{-22pt}
\subsection{Efficiency and Representation Retention in LLM}
\label{sec:llm_eff_stab}
% \vspace{-8pt}
The maze analogue isolates \emph{why} recovery training is hard and what guidance can do. We now turn to the LLM setting to study \emph{how the source of guidance} shapes both learning efficiency and representational stability when training recovery behaviors on top of an already capable models.
% \vspace{-4pt}

Self-guided \textsc{GASP} improves MR-GSM8K substantially earlier and reaches a higher final score than \textsc{GASP}-T as shown in \cref{fig:efficiency}. This efficiency advantage coincides with a clear stability difference: \textsc{GASP}-T exhibits much larger \emph{layerwise representation shift}. Concretely, we compute $d^{(\ast)}$ by (i) collecting mean hidden-state representations on a fixed TruthfulQA probe set, (ii) projecting them onto a 2D PCA basis, and (iii) measuring the Euclidean distance between the pre- and post-training centroids in this PCA space. Larger $d^{(\ast)}$ therefore indicates a larger global movement of the model’s representations on the probe distribution.
% \vspace{-4pt}

This representation movement tracks downstream interference: the method with larger shift (teacher guided \textsc{GASP}) also suffers a larger drop in TruthfulQA accuracy, while self-guided \textsc{GASP} largely preserves it. Overall, in-distribution repair guidance is more continual-learning friendly in our setting: it achieves faster recovery learning while inducing smaller representation drift and better retention on an external QA benchmark.
\vspace{-12pt}
\section{Related Work}
\vspace{-6pt}
A growing line of work studies how models can \emph{improve their own reasoning} by generating feedback, revisions, or new training data without relying on additional human annotations \citep{zhou2025expo, lismart}. At inference time, iterative self-revision frameworks such as \textsc{Self-Refine} \citep{madaan2023self} and \textsc{Reflexion}\citep{shinn2023reflexion} treat the model as its own critic: the system produces an initial solution, generates natural-language feedback or reflections from task outcomes, and uses this feedback to revise subsequent attempts (often via an explicit memory buffer) without weight updates. At training time, self-bootstrapping methods such as \textsc{STaR} \citep{zelikman2022star} iteratively generate rationales, filter by final-answer correctness, and fine-tune on the resulting self-generated traces, while \textsc{SCoRe} \citep{kumar2024training} more directly targets \emph{correction} by training multi-turn self-correction policies with online RL and addressing distribution mismatch and collapse in offline self-correction fine-tuning. In parallel, self-play fine-tuning methods such as \textsc{SPIN} \citep{chen2024self} refine a model by iteratively generating synthetic responses and optimizing objectives that compare synthetic outputs to human-annotated data; follow-up work analyzes instability in pairwise self-play objectives and proposes more stable variants \citep{wang2025triplets}. Broader discussion of Reinforcement fine-tuning (RFT) at Appendix~\ref{app:self_improve}.

\vspace{-12pt}
\section{Conclusion}
\vspace{-5pt}
We introduced \textsc{GASP}, an adversarial self-play framework for robust reasoning under fallible conditioning using only verifiable outcome rewards. By coupling a polluter that learns creating corruptions with an agent that learns to detect-and-repair under the polluted conditioning, \textsc{GASP} turns strong-but-brittle reasoners into robust ones while typically preserving and often improving clean performance.

\clearpage
\section*{Impact Statement}
This paper presents work whose goal is to advance the robustness of machine learning models for reasoning under fallible or misleading context. By improving a model’s ability to detect and repair errors using verifiable feedback alone, this work may contribute to more reliable and trustworthy AI systems in practical deployments. While there are many potential societal consequences of improved reasoning robustness, we do not foresee specific negative impacts beyond those already well established for general advances in machine learning.

\bibliography{example_paper}
\bibliographystyle{icml2026}

\clearpage
% =========================
% Appendix (Copy-paste To-Do Skeleton)
% =========================
\onecolumn
\appendix
\section{Appendix}
\label{sec:appendix}

\subsection{Learning Signal Scarcity}
\label{app:learning_signal_scarcity}
{Formally, define the agent’s \emph{recovery rate} under the polluted steer as
\[
p_\theta \;\triangleq\; \Pr_{\tau_{\mathrm{off}}\sim \pi_\theta(\cdot\mid s^{\mathrm{poll}})}\!\big[a_{\mathrm{off}}=a^\star\big],
\]
i.e., the probability that an on-policy rollout conditioned on $s^{\mathrm{poll}}$ ends with the correct final answer. GRPO samples a group of $G$ rollouts for the same $(q,s^{\mathrm{poll}})$. Assuming these rollouts are approximately independent, the probability that the group contains \emph{at least one} successful recovery is
\[
\Pr(\text{any success in group}) \;=\; 1-(1-p_\theta)^G.
\]
In the rare-recovery regime where $Gp_\theta \ll 1$ (equivalently $p_\theta \ll 1/G$), this simplifies to
the intuitive approximation
\[
1-(1-p_\theta)^G \;\approx\; Gp_\theta,
\]
meaning the chance of seeing \emph{any} success in a group grows only linearly with both group size $G$ and
recovery rate $p_\theta$.

When $p_\theta$ is tiny (empirically typical under pollution), most groups contain no successful trajectories, so the terminal rewards are nearly identical across the group (often all zeros) and group-relative normalization produces uninformative advantages. As a result, the GRPO update provides little directed pressure toward recovery and is dominated by noise.}

\subsection{Limitations of Repair Guidance with a Teacher Model}
\label{app:ood_repair_guidance}
A straightforward approach is to use a strong external teacher policy $\pi_{\mathrm{T}}$ to generate a repair snippet $w^{\mathrm{fix}}_{\mathrm{T}} \sim \pi_{\mathrm{T}}(\cdot \mid p)$ from a diagnosis prompt $p$ that exposes the inconsistency (e.g., including both $w^{\mathrm{clean}}$ and $w^{\mathrm{poll}}$), and then behavior-clone this snippet under the \emph{deployment} conditioning $s^{\mathrm{poll}}$:
\[
J_{\mathrm{guide}}^{\mathrm{T}}(\theta)
=
\mathbb{E}\big[\log \pi_\theta(w^{\mathrm{fix}}_{\mathrm{T}} \mid s^{\mathrm{poll}})\big].
\]
The resulting agent objective is
\[
J_{\mathrm{agent}}(\theta)
=
J_{\mathrm{GRPO}}^{\mathrm{rec}}(\theta)
+
\lambda\, J_{\mathrm{guide}}^{\mathrm{T}}(\theta).
\]

Empirically, teacher fixes can be \emph{semantically correct yet inefficient as learning signals} because they often lie in low-probability regions of the student policy. Concretely, under the polluted steer,
\[
\pi_\theta(w^{\mathrm{fix}}_{\theta}\mid s^{\mathrm{poll}})\gg \pi_\theta(w^{\mathrm{fix}}_{\mathrm{T}}\mid s^{\mathrm{poll}})
\]
\vspace{-0.8cm}
\[
\quad \text{for typical on-policy continuations } w_\theta^{\mathrm{fix}} ,
\]
so the update must push probability mass into regions that the current policy rarely visits. Moreover, under an explicit KL penalty to a reference policy, increasing the likelihood of a repair $w^{\mathrm{fix}}_{\mathrm{T}}$ with $\pi_\theta(w^{\mathrm{fix}}_{\mathrm{T}}\mid s^{\mathrm{poll}})\approx 0$ requires a large deviation from the reference, so the KL term directly suppresses the update and slows learning.

\subsection{Why In-Distribution Guidance Yields Larger First-Order Gains}
\label{app:proof_guidance_scaling}
\paragraph{Setup.}
Fix a polluted steer (deployment conditioning) $s^{\mathrm{poll}}$.
Let the model generate a full continuation $o=(o_1,\dots,o_T)$ (e.g., repair text plus the remaining reasoning and final answer)
according to an autoregressive policy $\pi_\theta(o\mid s^{\mathrm{poll}})=\prod_{t=1}^T \pi_\theta(o_t\mid s^{\mathrm{poll}},o_{<t})$.
Let $r(o)\in[0,1]$ denote the terminal verifiable reward (e.g., correctness indicator).
Define the (conditional) deployment objective
\begin{equation}
J(\theta \mid s^{\mathrm{poll}})
\;:=\;
\mathbb{E}_{o\sim \pi_\theta(\cdot\mid s^{\mathrm{poll}})}\!\big[r(o)\big].
\label{eq:J_cond}
\end{equation}
(An unconditional objective that averages over $(q,a^\star)$ and corresponding $s^{\mathrm{poll}}$ follows by linearity of expectation.)

\paragraph{Guidance update.}
Consider an imitation-style guidance term that increases the likelihood of a specific repair snippet $w^{\mathrm{fix}}$
under the \emph{deployment} conditioning $s^{\mathrm{poll}}$:
\begin{equation}
L_{\mathrm{guide}}(\theta)
\;:=\;
\omega \,\log \pi_\theta\!\big(w^{\mathrm{fix}}\mid s^{\mathrm{poll}}\big),
\qquad \omega>0.
\label{eq:L_guide}
\end{equation}
A gradient-ascent step with step size $\eta>0$ is
\begin{equation}
\theta^+ \;=\; \theta + \eta\, g,
\qquad
g \;:=\; \nabla_\theta L_{\mathrm{guide}}(\theta)
= \omega \,\nabla_\theta \log \pi_\theta\!\big(w^{\mathrm{fix}}\mid s^{\mathrm{poll}}\big).
\label{eq:g_def}
\end{equation}

\paragraph{Goal.}
We prove that the \emph{first-order} improvement in $J(\theta\mid s^{\mathrm{poll}})$ due to the guidance step
contains a dominant term that scales linearly with $\pi_\theta(w^{\mathrm{fix}}\mid s^{\mathrm{poll}})$.
In particular, under mild near-orthogonality assumptions (standard in high-dimensional score-function analyses),
the leading contribution is proportional to
$\pi_\theta(w^{\mathrm{fix}}\mid s^{\mathrm{poll}})\big\|\nabla_\theta \log \pi_\theta(w^{\mathrm{fix}}\mid s^{\mathrm{poll}})\big\|_2^2$,
matching the scaling used in \S\ref{sec:guidance}.

\subsection{Main decomposition}

\begin{lemma}[First-order gain decomposition]
\label{lem:first_order_gain}
Let $\theta^+$ be defined by \eqref{eq:g_def}. Then a first-order Taylor expansion gives
\begin{equation}
\Delta J
\;:=\;
J(\theta^+ \mid s^{\mathrm{poll}}) - J(\theta \mid s^{\mathrm{poll}})
\;\approx\;
\eta\, \nabla_\theta J(\theta\mid s^{\mathrm{poll}})^\top g.
\label{eq:taylor}
\end{equation}
Moreover, writing $\pi_\theta(\cdot)$ as shorthand for $\pi_\theta(\cdot\mid s^{\mathrm{poll}})$,
\begin{align}
\nabla_\theta J(\theta\mid s^{\mathrm{poll}})
&=
\mathbb{E}_{o\sim\pi_\theta}\!\Big[r(o)\,\nabla_\theta \log \pi_\theta(o)\Big],
\label{eq:true_grad}
\\
\nabla_\theta J(\theta\mid s^{\mathrm{poll}})^\top g
&=
\omega\,
\mathbb{E}_{o\sim\pi_\theta}\!\Big[
r(o)\,
\big\langle \nabla_\theta \log \pi_\theta(o),\;
\nabla_\theta \log \pi_\theta(w^{\mathrm{fix}})\big\rangle
\Big].
\label{eq:inner_prod_expand}
\end{align}
Finally, letting $\Omega_{\mathrm{fix}}$ denote the set of continuations whose prefix equals $w^{\mathrm{fix}}$,
\[
\Omega_{\mathrm{fix}}
:=\{o:\; o_{1:|w^{\mathrm{fix}}|}=w^{\mathrm{fix}}\},
\]
the inner product decomposes exactly as
\begin{align}
\nabla_\theta J(\theta\mid s^{\mathrm{poll}})^\top g
&=
\omega\,\underbrace{
\sum_{o\in \Omega_{\mathrm{fix}}}
\pi_\theta(o)\,r(o)\,
\big\langle \nabla_\theta \log \pi_\theta(o),\;
\nabla_\theta \log \pi_\theta(w^{\mathrm{fix}})\big\rangle
}_{T_1}
\;+\;
\end{align}
\begin{align}
\omega\,\underbrace{
\sum_{o\notin \Omega_{\mathrm{fix}}}
\pi_\theta(o)\,r(o)\,
\big\langle \nabla_\theta \log \pi_\theta(o),\;
\nabla_\theta \log \pi_\theta(w^{\mathrm{fix}})\big\rangle
}_{T_2}.
\label{eq:T1T2}
\end{align}
\end{lemma}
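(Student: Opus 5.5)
The lemma has three parts, and I would prove them in order; each is elementary.

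\textbf{Taylor step.} Since $\pi_\theta$ is a smooth (softmax-parameterized) autoregressive model, $J(\theta\mid s^{\mathrm{poll}})$ is a finite (or absolutely convergent) sum of smooth functions of $\theta$, hence differentiable. Expanding $J$ at $\theta$ along the direction $\eta g$ gives
\[
J(\theta+\eta g)=J(\theta)+\eta\,\nabla_\theta J^\top g+O(\eta^2\|g\|^2),
\]
which is \eqref{eq:taylor}. The remainder is controlled by a bound on the Hessian of $J$ along the segment from $\theta$ to $\theta^+$.

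\textbf{Gradient formula.} For \eqref{eq:true_grad} I would use the log-derivative trick. Write $J=\sum_o \pi_\theta(o)r(o)$, differentiate term by term, and use $\nabla\pi_\theta=\pi_\theta\nabla\log\pi_\theta$; this is legitimate wherever $\pi_\theta(o)>0$, which softmax guarantees. Since $r$ does not depend on $\theta$ (it is a verifier outcome), no extra term appears. Substituting $g=\omega\nabla\log\pi_\theta(w^{\mathrm{fix}})$ from \eqref{eq:g_def} and pulling the fixed vector inside the expectation by linearity gives \eqref{eq:inner_prod_expand}.

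\textbf{Decomposition.} For \eqref{eq:T1T2}, write the expectation as the sum $\sum_o \pi_\theta(o)r(o)\langle\cdot,\cdot\rangle$. The sets $\Omega_{\mathrm{fix}}$ and its complement partition the space of continuations, so splitting the sum over this partition gives $T_1+T_2$ exactly. It is also worth recording the structure that later motivates the linear scaling. For $o\in\Omega_{\mathrm{fix}}$, autoregressive factorization gives
\[
\pi_\theta(o)=\pi_\theta(w^{\mathrm{fix}})\,\pi_\theta(o_{>|w|}\mid s^{\mathrm{poll}},w^{\mathrm{fix}}),
\]
and correspondingly $\nabla\log\pi_\theta(o)=\nabla\log\pi_\theta(w^{\mathrm{fix}})+\nabla\log\pi_\theta(\text{suffix})$. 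Hence
\[
T_1=\pi_\theta(w^{\mathrm{fix}})\Big(\mathbb{E}[r\mid w^{\mathrm{fix}}]\,\|\nabla\log\pi_\theta(w^{\mathrm{fix}})\|^2+\mathbb{E}\big[r\,\langle\nabla\log\pi(\text{suffix}),\nabla\log\pi(w^{\mathrm{fix}})\rangle\mid w^{\mathrm{fix}}\big]\Big).
\]

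\textbf{Main obstacle.} The only delicate point is rigor in the Taylor step and the interchange of differentiation with summation when the continuation space is countably infinite. I would handle this by assuming a bounded horizon $T$, which makes the sum finite, or alternatively by invoking dominated convergence using $r\in[0,1]$ and bounded scores. Beyond that, everything is exact algebra, and the "$\approx$" in \eqref{eq:taylor} is only the first-order truncation.
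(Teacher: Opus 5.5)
Your proposal is correct and matches the paper's proof essentially step for step: a first-order Taylor expansion for \eqref{eq:taylor}, the score-function identity $\nabla_\theta \pi_\theta = \pi_\theta \nabla_\theta \log \pi_\theta$ followed by substituting $g$ for \eqref{eq:true_grad}--\eqref{eq:inner_prod_expand}, and splitting the sum over $\Omega_{\mathrm{fix}}$ and its complement for \eqref{eq:T1T2}. Your extra factorization of $T_1$ is exactly what the paper does right after the lemma in \eqref{eq:T1_expand}, and your bounded-horizon remark (consistent with the paper's finite $T$) justifies exchanging the sum and the gradient, a step the paper leaves implicit.
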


\begin{proof}
Equation \eqref{eq:taylor} is the first-order Taylor approximation of $J(\cdot\mid s^{\mathrm{poll}})$ around $\theta$.
For \eqref{eq:true_grad}, apply the score-function identity:
\[
\nabla_\theta J(\theta\mid s^{\mathrm{poll}})
=
\nabla_\theta \sum_o \pi_\theta(o)\,r(o)
=
\sum_o \pi_\theta(o)\,r(o)\,\nabla_\theta \log \pi_\theta(o)
=
\mathbb{E}_{o\sim\pi_\theta}[r(o)\nabla_\theta \log \pi_\theta(o)].
\]
Substituting $g=\omega\nabla_\theta \log \pi_\theta(w^{\mathrm{fix}})$ yields \eqref{eq:inner_prod_expand}.
The split \eqref{eq:T1T2} is obtained by partitioning the support of $o$ into $\Omega_{\mathrm{fix}}$ and its complement.
\end{proof}

\subsection{Isolating the in-distribution scaling}

We now show that $T_1$ contains a term proportional to $\pi_\theta(w^{\mathrm{fix}}\mid s^{\mathrm{poll}})$.
For any $o\in \Omega_{\mathrm{fix}}$, write $o=w^{\mathrm{fix}}\circ z$, where $z$ is the suffix after the snippet.
Then
\begin{align}
\pi_\theta(o\mid s^{\mathrm{poll}})
&=
\pi_\theta(w^{\mathrm{fix}}\mid s^{\mathrm{poll}})
\;\pi_\theta(z\mid s^{\mathrm{poll}}, w^{\mathrm{fix}}),
\label{eq:factor_prob}
\\
\log \pi_\theta(o\mid s^{\mathrm{poll}})
&=
\log \pi_\theta(w^{\mathrm{fix}}\mid s^{\mathrm{poll}})
+
\log \pi_\theta(z\mid s^{\mathrm{poll}}, w^{\mathrm{fix}}),
\label{eq:factor_logprob}
\\
\nabla_\theta \log \pi_\theta(o\mid s^{\mathrm{poll}})
&=
\nabla_\theta \log \pi_\theta(w^{\mathrm{fix}}\mid s^{\mathrm{poll}})
+
\nabla_\theta \log \pi_\theta(z\mid s^{\mathrm{poll}}, w^{\mathrm{fix}}).
\label{eq:factor_grad}
\end{align}
Plugging \eqref{eq:factor_prob}--\eqref{eq:factor_grad} into $T_1$ from \eqref{eq:T1T2} gives
\begin{align}
T_1
&=
\pi_\theta(w^{\mathrm{fix}}\mid s^{\mathrm{poll}})
\;\mathbb{E}_{z\sim \pi_\theta(\cdot\mid s^{\mathrm{poll}},w^{\mathrm{fix}})}
\Big[
r(w^{\mathrm{fix}}\circ z)
\big\langle
\nabla_\theta \log \pi_\theta(w^{\mathrm{fix}}) + \nabla_\theta \log \pi_\theta(z\mid s^{\mathrm{poll}},w^{\mathrm{fix}}),
\;
\nabla_\theta \log \pi_\theta(w^{\mathrm{fix}})
\big\rangle
\Big]
\nonumber\\
&=
\pi_\theta(w^{\mathrm{fix}}\mid s^{\mathrm{poll}})
\;\mathbb{E}_{z\sim \pi_\theta(\cdot\mid s^{\mathrm{poll}},w^{\mathrm{fix}})}
\Big[
r(w^{\mathrm{fix}}\circ z)
\Big(
\big\|\nabla_\theta \log \pi_\theta(w^{\mathrm{fix}})\big\|_2^2
+
\big\langle
\nabla_\theta \log \pi_\theta(z\mid s^{\mathrm{poll}},w^{\mathrm{fix}}),
\;
\nabla_\theta \log \pi_\theta(w^{\mathrm{fix}})
\big\rangle
\Big)
\Big].
\label{eq:T1_expand}
\end{align}

Define the (conditional) success probability after emitting the repair snippet:
\begin{equation}
Q_\theta(s^{\mathrm{poll}}, w^{\mathrm{fix}})
\;:=\;
\mathbb{E}_{z\sim \pi_\theta(\cdot\mid s^{\mathrm{poll}},w^{\mathrm{fix}})}
\big[r(w^{\mathrm{fix}}\circ z)\big]
\in[0,1].
\label{eq:Q_def}
\end{equation}
Then \eqref{eq:T1_expand} can be written as
\begin{align}
T_1
&=
\pi_\theta(w^{\mathrm{fix}}\mid s^{\mathrm{poll}})
\;Q_\theta(s^{\mathrm{poll}}, w^{\mathrm{fix}})
\;\big\|\nabla_\theta \log \pi_\theta(w^{\mathrm{fix}}\mid s^{\mathrm{poll}})\big\|_2^2
\;+\;
\end{align}
\begin{align}
\underbrace{
\pi_\theta(w^{\mathrm{fix}}\mid s^{\mathrm{poll}})
\;\mathbb{E}_{z} \!\Big[
r(w^{\mathrm{fix}}\circ z)\,
\big\langle
\nabla_\theta \log \pi_\theta(z\mid s^{\mathrm{poll}},w^{\mathrm{fix}}),\;
\nabla_\theta \log \pi_\theta(w^{\mathrm{fix}})\big\rangle
\Big]
}_{E_{\mathrm{suffix}}}.
\label{eq:T1_Q_plus_err}
\end{align}

\paragraph{Near-orthogonality assumption (standard).}
The remaining terms $E_{\mathrm{suffix}}$ and $T_2$ are cross inner-products between score functions of \emph{different}
token sequences (or different segments of the same sequence). In very high-dimensional parameter spaces, these
cross terms are typically small unless the sequences are extremely similar; see, e.g., the orthogonality-based analysis
in Appendix A of ExPO.
Formally, we assume the following bounded-cross-term condition.

\begin{assumption}[Small cross score inner-products]
\label{asmp:cross_small}
There exist nonnegative constants $\epsilon_{\mathrm{suffix}}$ and $\epsilon_{\mathrm{off}}$ such that
\begin{align}
\Big|
\mathbb{E}_{z\sim \pi_\theta(\cdot\mid s^{\mathrm{poll}},w^{\mathrm{fix}})}
\!\Big[
r(w^{\mathrm{fix}}\circ z)\,
\big\langle
\nabla_\theta \log \pi_\theta(z\mid s^{\mathrm{poll}},w^{\mathrm{fix}}),\;
\nabla_\theta \log \pi_\theta(w^{\mathrm{fix}}\mid s^{\mathrm{poll}})
\big\rangle
\Big]
\Big|
&\le \epsilon_{\mathrm{suffix}},
\label{eq:eps_suffix}
\\
\Big|
\sum_{o\notin \Omega_{\mathrm{fix}}}
\pi_\theta(o\mid s^{\mathrm{poll}})\,r(o)\,
\big\langle \nabla_\theta \log \pi_\theta(o\mid s^{\mathrm{poll}}),\;
\nabla_\theta \log \pi_\theta(w^{\mathrm{fix}}\mid s^{\mathrm{poll}})\big\rangle
\Big|
&\le \epsilon_{\mathrm{off}}.
\label{eq:eps_off}
\end{align}
\end{assumption}

\begin{proposition}[Dominant scaling with $\pi_\theta(w^{\mathrm{fix}}\mid s^{\mathrm{poll}})$]
\label{prop:dominant_scaling}
Under Assumption~\ref{asmp:cross_small}, the first-order improvement satisfies
\begin{align}
\Delta J
\;\approx\;
\eta\,\omega\,
\pi_\theta(w^{\mathrm{fix}}\mid s^{\mathrm{poll}})
\;Q_\theta(s^{\mathrm{poll}}, w^{\mathrm{fix}})
\;\big\|\nabla_\theta \log \pi_\theta(w^{\mathrm{fix}}\mid s^{\mathrm{poll}})\big\|_2^2
\;+\;
\eta\,\omega\,\xi,
\label{eq:deltaJ_main}
\end{align}
where the residual $\xi$ is bounded as $|\xi|\le \pi_\theta(w^{\mathrm{fix}}\mid s^{\mathrm{poll}})\epsilon_{\mathrm{suffix}}+\epsilon_{\mathrm{off}}$.
In particular, when cross terms are small (small $\epsilon_{\mathrm{suffix}},\epsilon_{\mathrm{off}}$), the dominant contribution
scales linearly with $\pi_\theta(w^{\mathrm{fix}}\mid s^{\mathrm{poll}})$.
\end{proposition}

\begin{proof}
Combine Lemma~\ref{lem:first_order_gain} with \eqref{eq:T1_Q_plus_err}:
\[
\nabla_\theta J(\theta\mid s^{\mathrm{poll}})^\top g
=
\omega\,(T_1+T_2)
=
\omega\Big(
\pi_\theta(w^{\mathrm{fix}}\mid s^{\mathrm{poll}})
Q_\theta \|\nabla_\theta \log \pi_\theta(w^{\mathrm{fix}})\|_2^2
+ E_{\mathrm{suffix}}
+ T_2
\Big).
\]
Apply Assumption~\ref{asmp:cross_small} to bound $E_{\mathrm{suffix}}$ and $T_2$, and multiply by $\eta$ per \eqref{eq:taylor}.
\end{proof}

\paragraph{Recovering the simplified expression used in \S\ref{sec:guidance}.}
If $w^{\mathrm{fix}}$ is a near-certain repair under the current policy (i.e., $Q_\theta(s^{\mathrm{poll}},w^{\mathrm{fix}})\approx 1$),
and cross terms are negligible (so $\xi\approx 0$), Proposition~\ref{prop:dominant_scaling} reduces to
\begin{equation}
\Delta J
\;\approx\;
\eta\,\omega\,
\pi_\theta(w^{\mathrm{fix}}\mid s^{\mathrm{poll}})
\;\big\|\nabla_\theta \log \pi_\theta(w^{\mathrm{fix}}\mid s^{\mathrm{poll}})\big\|_2^2,
\label{eq:deltaJ_simplified}
\end{equation}
which is the scaling statement referenced in \S\ref{sec:guidance}.

\subsection{Corollary: why in-distribution repairs yield larger gains}

\begin{corollary}[Preference for in-distribution repairs]
\label{cor:in_dist_better}
Let $w^{(1)}$ and $w^{(2)}$ be two candidate repair snippets under the same $s^{\mathrm{poll}}$.
Assume they have comparable weights and score norms, and comparable downstream success probabilities:
\[
\omega^{(1)}\approx \omega^{(2)},\qquad
\big\|\nabla_\theta \log \pi_\theta(w^{(1)}\mid s^{\mathrm{poll}})\big\|_2^2 \approx
\big\|\nabla_\theta \log \pi_\theta(w^{(2)}\mid s^{\mathrm{poll}})\big\|_2^2,\qquad
Q_\theta(s^{\mathrm{poll}},w^{(1)})\approx Q_\theta(s^{\mathrm{poll}},w^{(2)}),
\]
and cross terms are negligible. Then the ratio of first-order gains satisfies
\begin{equation}
\frac{\Delta J(w^{(1)})}{\Delta J(w^{(2)})}
\;\approx\;
\frac{\pi_\theta(w^{(1)}\mid s^{\mathrm{poll}})}{\pi_\theta(w^{(2)}\mid s^{\mathrm{poll}})}.
\label{eq:gain_ratio}
\end{equation}
In particular, if $w^{(1)}$ is substantially more likely under the current policy than $w^{(2)}$ (more in-distribution),
then it yields a substantially larger expected first-order improvement for the same-sized update.
\end{corollary}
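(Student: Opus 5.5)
The plan is to apply Proposition~\ref{prop:dominant_scaling} to each candidate separately and then take the ratio. For $k\in\{1,2\}$, write $p_k:=\pi_\theta(w^{(k)}\mid s^{\mathrm{poll}})$, $Q_k:=Q_\theta(s^{\mathrm{poll}},w^{(k)})$ and $N_k:=\|\nabla_\theta\log\pi_\theta(w^{(k)}\mid s^{\mathrm{poll}})\|_2^2$. The guidance step uses $g^{(k)}=\omega^{(k)}\nabla_\theta\log\pi_\theta(w^{(k)}\mid s^{\mathrm{poll}})$ with a common step size $\eta$ (``same-sized update''). Lemma~\ref{lem:first_order_gain} together with \eqref{eq:deltaJ_main} then gives
\[
\Delta J(w^{(k)})\approx \eta\,\omega^{(k)}\,p_k\,Q_k\,N_k+\eta\,\omega^{(k)}\,\xi^{(k)},
\qquad |\xi^{(k)}|\le p_k\,\epsilon^{(k)}_{\mathrm{suffix}}+\epsilon^{(k)}_{\mathrm{off}},
\]
where Assumption~\ref{asmp:cross_small} is invoked separately for each snippet, so its constants may depend on $k$.

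Next I would use the hypothesis that cross terms are negligible to drop the residuals. Concretely, I would require $\epsilon^{(k)}_{\mathrm{off}}\ll p_kQ_kN_k$ and $\epsilon^{(k)}_{\mathrm{suffix}}\ll Q_kN_k$. Each $\Delta J(w^{(k)})$ then equals its leading term up to a factor $(1+o(1))$. Taking the ratio gives
\[
\frac{\Delta J(w^{(1)})}{\Delta J(w^{(2)})}\approx\frac{\omega^{(1)}}{\omega^{(2)}}\cdot\frac{Q_1}{Q_2}\cdot\frac{N_1}{N_2}\cdot\frac{p_1}{p_2}.
\]
The three comparability assumptions make the first three factors $\approx 1$, which yields \eqref{eq:gain_ratio}. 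The ``in particular'' clause follows directly: if $p_1\gg p_2$, the ratio is $\gg 1$, and both gains are nonnegative at leading order since $\omega,Q,N\ge 0$.

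The main obstacle is making ``negligible'' precise enough that the ratio is controlled. This is delicate because $\epsilon_{\mathrm{off}}$ is an \emph{additive} error that does not scale with $p_k$. For an off-distribution snippet with $p_2\approx 0$, the leading term $p_2Q_2N_2$ may be as small as the residual, so the ratio could be distorted. My first approach is to state the negligibility condition relatively, as $\epsilon^{(k)}_{\mathrm{off}}\le\delta\,p_kQ_kN_k$ for a small $\delta$. Under that condition the ratio lies within a factor $(1\pm\delta)/(1\mp\delta)$ of $p_1/p_2$, times the comparability factors. If that relative condition seems too strong for $w^{(2)}$, I would fall back on proving the weaker inequality $\Delta J(w^{(1)})\gtrsim(p_1/p_2)\,\Delta J(w^{(2)})$ only up to the additive slack $\eta\,\omega\,\epsilon_{\mathrm{off}}$. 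That weaker form still supports the qualitative conclusion that in-distribution repairs are preferred.
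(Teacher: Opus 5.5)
Your proposal is correct and follows the paper's proof: apply Proposition~\ref{prop:dominant_scaling} to each snippet, then cancel the approximately equal factors $\omega$, $Q_\theta$, and the squared score norm. You also note that $\epsilon_{\mathrm{off}}$ is an additive error that does not scale with $\pi_\theta(w^{(2)}\mid s^{\mathrm{poll}})$, so for the ratio to be meaningful ``negligible'' must be read relative to each leading term $\pi_\theta(w^{(k)}\mid s^{\mathrm{poll}})\,Q_\theta(s^{\mathrm{poll}},w^{(k)})\,\|\nabla_\theta\log\pi_\theta(w^{(k)}\mid s^{\mathrm{poll}})\|_2^2$; this is a legitimate sharpening of a condition the paper leaves implicit.
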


\begin{proof}
Apply Proposition~\ref{prop:dominant_scaling} to each snippet and cancel the approximately equal factors.
\end{proof}

\subsection{Why Few Successes Yield Noisy GRPO Updates}
\label{app:few-successes-noisy}
We formalize the claim that when successful trajectories are rare, GRPO's outcome-only update is \emph{noisy} because
the success-conditioned contribution is estimated from very few samples (often one), yielding a low signal-to-noise ratio.

\paragraph{Setup.}
Fix a conditioning context $s$ (e.g., a polluted steer $s_{\mathrm{poll}}$ in Sec.~4.4).
GRPO samples a group of $G$ rollouts $\{\tau_i\}_{i=1}^G$ from a behavior policy $\pi_{\theta_{\mathrm{old}}}$.
Each rollout $\tau_i$ has token sequence $o_i=(o_{i,1},\dots,o_{i,|o_i|})$ and terminal reward
$R_i \in \{0,1\}$ (verifiable correctness).
GRPO forms the group-normalized reward
\begin{equation}
\tilde R_i
\;:=\;
\frac{R_i - \overline R}{s_R + \varepsilon},
\qquad
\overline R := \frac{1}{G}\sum_{j=1}^G R_j,
\qquad
s_R := \sqrt{\frac{1}{G}\sum_{j=1}^G (R_j-\overline R)^2},
\label{eq:grpo-adv}
\end{equation}
and assigns $\hat A_{i,t}=\tilde R_i$ for all tokens $t$ (Sec.~3.2).

To isolate the variance mechanism, we analyze the \emph{unclipped} surrogate (equivalently, the first-order/small-step
regime where $\rho_{i,t}(\theta)\approx 1$ and the $\min/\mathrm{clip}$ does not activate). Define the per-trajectory
score-gradient average
\begin{equation}
S_i(\theta)
\;:=\;
\frac{1}{|o_i|}\sum_{t=1}^{|o_i|}
\nabla_\theta \log \pi_\theta(o_{i,t}\mid s, o_{i,<t}),
\label{eq:Si-def}
\end{equation}
so that the corresponding (unclipped) GRPO gradient estimator can be written as
\begin{equation}
\hat g(\theta)
\;:=\;
\frac{1}{G}\sum_{i=1}^G \tilde R_i\, S_i(\theta).
\label{eq:g-hat}
\end{equation}
(Clipping and KL regularization can only \emph{downweight} some terms and thus do not remove the scarcity-induced noise;
Remark~\ref{rem:clip} discusses this.)

\paragraph{Closed form of group-normalized rewards for binary outcomes.}
Let $K := \sum_{i=1}^G R_i$ be the number of successes in the group.

\begin{lemma}[Binary-reward GRPO advantages]
\label{lem:binary-adv}
Condition on $K=k$ with $1\le k \le G-1$. Then $\overline R = k/G$ and $s_R = \sqrt{k(G-k)}/G$.
Consequently,
\begin{equation}
\tilde R_i
=
\begin{cases}
a_k := \sqrt{\frac{G-k}{k}}, & \text{if } R_i=1,\\[6pt]
b_k := -\sqrt{\frac{k}{G-k}}, & \text{if } R_i=0,
\end{cases}
\label{eq:ak-bk}
\end{equation}
(up to the negligible $\varepsilon$ in~\eqref{eq:grpo-adv}).
If $k=0$ (all failures) or $k=G$ (all successes), then $s_R=0$ and $\tilde R_i \approx 0$ for all $i$ due to $\varepsilon$,
so $\hat g(\theta)\approx 0$.
\end{lemma}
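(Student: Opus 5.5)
This is a direct computation on binary rewards; I would carry it out in three steps.

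First, the mean. Conditioning on $K=k$, exactly $k$ of the $R_i$ equal $1$ and the remaining $G-k$ equal $0$. Hence $\overline R = k/G$ immediately.

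Second, the standard deviation, using the population normalization in \eqref{eq:grpo-adv}. Each success contributes a squared deviation $(1-k/G)^2 = (G-k)^2/G^2$, and each failure contributes $(k/G)^2$. Summing and dividing by $G$ gives
\[
s_R^2=\frac{1}{G}\Big[k\frac{(G-k)^2}{G^2}+(G-k)\frac{k^2}{G^2}\Big]=\frac{k(G-k)\,(G-k+k)}{G^3}=\frac{k(G-k)}{G^2}.
\]
Therefore $s_R=\sqrt{k(G-k)}/G$, which is strictly positive for $1\le k\le G-1$.

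Third, the advantages, dropping $\varepsilon$ since $s_R>0$ is bounded away from zero for fixed $G$. For a success,
\[
\tilde R_i=\frac{(G-k)/G}{\sqrt{k(G-k)}/G}=\sqrt{\frac{G-k}{k}}=a_k.
\]
For a failure,
\[
\tilde R_i=\frac{-k/G}{\sqrt{k(G-k)}/G}=-\sqrt{\frac{k}{G-k}}=b_k.
\]
To justify "up to negligible $\varepsilon$", I would note that the exact values are $a_k\, s_R/(s_R+\varepsilon)$ and $b_k\, s_R/(s_R+\varepsilon)$, and that $s_R\ge \sqrt{G-1}/G$. The relative error is therefore $O(\varepsilon G/\sqrt{G-1})$.

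Degenerate cases. If $k=0$ or $k=G$, all $R_i$ equal $\overline R$, so $s_R=0$ and every numerator $R_i-\overline R$ vanishes. Then $\tilde R_i=0/\varepsilon=0$ exactly (not merely approximately). Consequently $\hat g(\theta)$ in \eqref{eq:g-hat} is $0$.

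There is no real obstacle here. The only point needing care is the algebraic factorization $k(G-k)^2+(G-k)k^2=k(G-k)G$ and consistently using the $1/G$ (not $1/(G-1)$) normalization in the variance.
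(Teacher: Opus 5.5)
Your proposal is correct and matches the paper's proof: compute $\overline R=k/G$ and the population variance $s_R^2=k(G-k)/G^2$, then substitute into the normalization to get $a_k$ and $b_k$. Your two additions are valid sharpenings of the paper's statement: the explicit relative error bound $\varepsilon/(s_R+\varepsilon)\le \varepsilon G/\sqrt{G-1}$, and the observation that in the degenerate cases the numerators vanish, so $\tilde R_i=0$ exactly rather than approximately.
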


\begin{proof}
When $K=k$, the reward multiset contains $k$ ones and $G-k$ zeros, so $\overline R=k/G$.
The variance is
\[
s_R^2
=
\frac{1}{G}\Big(k(1-k/G)^2 + (G-k)(0-k/G)^2\Big)
=
\frac{k(G-k)}{G^2},
\]
hence $s_R=\sqrt{k(G-k)}/G$. Plugging into~\eqref{eq:grpo-adv} yields~\eqref{eq:ak-bk}.
For $k\in\{0,G\}$ we have $s_R=0$ and $\tilde R_i=(R_i-\overline R)/\varepsilon\approx 0$.
\end{proof}

\paragraph{Decomposition as a difference of two sample means.}
Let $S^{(1)}:=\{i:R_i=1\}$ and $S^{(0)}:=\{i:R_i=0\}$.
Define the within-group sample means
\begin{equation}
\bar S_1 := \frac{1}{k}\sum_{i\in S^{(1)}} S_i(\theta),
\qquad
\bar S_0 := \frac{1}{G-k}\sum_{i\in S^{(0)}} S_i(\theta).
\label{eq:barS}
\end{equation}

\begin{lemma}[Gradient estimator given $K=k$]
\label{lem:grad-decomp}
Condition on $K=k$ with $1\le k\le G-1$. Then
\begin{equation}
\hat g(\theta)
=
c_k\big(\bar S_1 - \bar S_0\big),
\qquad
c_k := \frac{\sqrt{k(G-k)}}{G}.
\label{eq:ghat-decomp}
\end{equation}
In particular, when $k=1$, the update depends on a \emph{single} success trajectory via $\bar S_1=S_{i^\star}$.
\end{lemma}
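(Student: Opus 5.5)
The plan is to substitute the closed-form advantages of Lemma~\ref{lem:binary-adv} into the unclipped estimator \eqref{eq:g-hat} and then regroup the sum according to reward value. We fix $K=k$ with $1\le k\le G-1$ and split the sum over $i$ into the two index sets $S^{(1)}$ and $S^{(0)}$. On each set the normalized reward $\tilde R_i$ is constant, equal to $a_k$ or $b_k$, so
\[
\hat g(\theta)=\frac{1}{G}\Big(a_k\sum_{i\in S^{(1)}}S_i(\theta)+b_k\sum_{i\in S^{(0)}}S_i(\theta)\Big)
=\frac{k\,a_k}{G}\,\bar S_1+\frac{(G-k)\,b_k}{G}\,\bar S_0,
\]
using the definitions \eqref{eq:barS}.

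Next we simplify the two coefficients. We have $k a_k = k\sqrt{(G-k)/k}=\sqrt{k(G-k)}$, and $(G-k)b_k=-(G-k)\sqrt{k/(G-k)}=-\sqrt{k(G-k)}$. Both coefficients therefore equal $\pm c_k$ with $c_k=\sqrt{k(G-k)}/G$, which gives $\hat g(\theta)=c_k(\bar S_1-\bar S_0)$ exactly; this is \eqref{eq:ghat-decomp}.

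For the special case $k=1$, the set $S^{(1)}$ is a singleton $\{i^\star\}$, so $\bar S_1=S_{i^\star}$ directly from the definition.

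There is no real obstacle in this argument. The only point needing care is the $\varepsilon$ in \eqref{eq:grpo-adv}: the identity holds exactly for $\varepsilon=0$ and up to a factor $s_R/(s_R+\varepsilon)$ otherwise. This matches the ``up to negligible $\varepsilon$'' convention already adopted in Lemma~\ref{lem:binary-adv}.
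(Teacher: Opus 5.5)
Your proposal is correct and matches the paper's proof essentially line for line: substitute the constant advantages $a_k$, $b_k$ from Lemma~\ref{lem:binary-adv}, regroup the sum over $S^{(1)}$ and $S^{(0)}$, and use $k a_k = \sqrt{k(G-k)} = -(G-k)b_k$. Your note that the identity holds exactly for $\varepsilon=0$ and otherwise up to the factor $s_R/(s_R+\varepsilon)$ is an accurate refinement of the paper's ``negligible $\varepsilon$'' convention.
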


\begin{proof}
By Lemma~\ref{lem:binary-adv}, $\tilde R_i=a_k$ on successes and $\tilde R_i=b_k$ on failures. Thus
\[
\hat g(\theta)
=
\frac{1}{G}\Big(a_k\sum_{i\in S^{(1)}} S_i(\theta) + b_k\sum_{i\in S^{(0)}} S_i(\theta)\Big)
=
\frac{1}{G}\Big(a_k k\,\bar S_1 + b_k (G-k)\,\bar S_0\Big).
\]
Using $a_k k = \sqrt{k(G-k)}$ and $b_k(G-k)=-\sqrt{k(G-k)}$ yields~\eqref{eq:ghat-decomp}.
\end{proof}

\paragraph{Variance and signal-to-noise.}
Assume that conditional on the reward, the score-gradients have finite second moments:
\begin{equation}
\mu_r := \mathbb{E}[S_i(\theta)\mid R_i=r],
\qquad
\Sigma_r := \mathrm{Cov}(S_i(\theta)\mid R_i=r),
\qquad r\in\{0,1\}.
\label{eq:muSigma}
\end{equation}
Assume also that $\{S_i(\theta)\}_{i=1}^G$ are conditionally independent given $\{R_i\}_{i=1}^G$
(standard for independent rollouts).

\begin{proposition}[Conditional covariance and SNR improve with \#successes]
\label{prop:var-snr}
Condition on $K=k$ with $1\le k\le G-1$.
Then
\begin{align}
\mathbb{E}\big[\hat g(\theta)\mid K=k\big]
&= c_k\big(\mu_1-\mu_0\big),
\label{eq:cond-mean}\\
\mathrm{Cov}\big(\hat g(\theta)\mid K=k\big)
&= c_k^2\left(\frac{\Sigma_1}{k} + \frac{\Sigma_0}{G-k}\right).
\label{eq:cond-cov}
\end{align}
Moreover, defining a (squared) signal-to-noise ratio using Frobenius/trace variance,
\begin{equation}
\mathrm{SNR}^2(k)
\;:=\;
\frac{\big\|\mathbb{E}[\hat g(\theta)\mid K=k]\big\|_2^2}{
\mathbb{E}\big[\|\hat g(\theta)-\mathbb{E}[\hat g(\theta)\mid K=k]\|_2^2 \mid K=k\big]},
\label{eq:snr-def}
\end{equation}
we obtain the closed form
\begin{equation}
\mathrm{SNR}^2(k)
=
\frac{\|\mu_1-\mu_0\|_2^2}{
\mathrm{tr}(\Sigma_1)/k + \mathrm{tr}(\Sigma_0)/(G-k)}.
\label{eq:snr-closed}
\end{equation}
In particular, $\mathrm{SNR}(k)$ is increasing in $k$ (holding other terms fixed), and when $k=1$ the success term
$\mathrm{tr}(\Sigma_1)/k$ receives \emph{no averaging reduction}, reflecting that the success-conditioned contribution
is estimated from a single trajectory.
\end{proposition}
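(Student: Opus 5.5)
The plan is to work conditionally on the full reward vector $(R_1,\dots,R_G)$ with $K=k$, and to use Lemma~\ref{lem:grad-decomp}. Given $K=k$, the advantages are the deterministic constants $a_k,b_k$ from Lemma~\ref{lem:binary-adv}, so
\[
\hat g(\theta)=c_k(\bar S_1-\bar S_0).
\]
Both moments then reduce to sample-mean calculations. One subtlety is that $\mu_r,\Sigma_r$ are defined conditional on $R_i=r$, while we are conditioning on the whole vector. I will argue that, for independent rollouts, conditioning on the whole reward vector leaves the conditional law of each $S_i$ given $R_i=r$ unchanged. This holds because the pairs $(R_i,S_i)$ are i.i.d. across $i$. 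Afterwards I average over which index sets realize $K=k$; the result does not depend on the specific labeling.

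For the mean \eqref{eq:cond-mean}: each $S_i$ with $i\in S^{(1)}$ has conditional mean $\mu_1$, and each $S_i$ with $i\in S^{(0)}$ has conditional mean $\mu_0$. By linearity, $\mathbb{E}[\bar S_1\mid K=k]=\mu_1$ and $\mathbb{E}[\bar S_0\mid K=k]=\mu_0$, which gives the stated mean after multiplying by $c_k$.

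For the covariance \eqref{eq:cond-cov}: I use the assumed conditional independence of the $S_i$ given the rewards. It makes the $k$ success terms mutually uncorrelated, the $G-k$ failure terms mutually uncorrelated, and the two groups uncorrelated with each other. Hence
\[
\mathrm{Cov}(\bar S_1)=\Sigma_1/k,\qquad \mathrm{Cov}(\bar S_0)=\Sigma_0/(G-k),
\]
and the cross-covariance vanishes. The covariance of the difference is therefore the sum, and scaling by $c_k^2$ gives \eqref{eq:cond-cov}.

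For the SNR \eqref{eq:snr-closed}: the denominator of \eqref{eq:snr-def} is the trace of the conditional covariance, since $\mathbb{E}\|X-\mathbb{E}X\|_2^2=\mathrm{tr}\,\mathrm{Cov}(X)$. The numerator is $c_k^2\|\mu_1-\mu_0\|_2^2$. The factor $c_k^2$ cancels, and it is positive because $1\le k\le G-1$. This yields the closed form, with the implicit assumption that the denominator is nonzero, i.e. not both traces vanish.

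For the monotonicity claim: "holding other terms fixed" must be read as varying $k$ while freezing $\mu_r,\Sigma_r$ and the failure count. Literally, with $G$ fixed, $\mathrm{tr}(\Sigma_0)/(G-k)$ increases with $k$. So the cleanest honest statement is that the success-side variance $\mathrm{tr}(\Sigma_1)/k$ is strictly decreasing in $k$, and therefore so is the total denominator whenever the failure term is held fixed. I expect this interpretive step to be the main obstacle. I would state explicitly that monotonicity refers to the success contribution, and note the $k=1$ case as the remark requires: $\bar S_1=S_{i^\star}$, so that term carries the full $\mathrm{tr}(\Sigma_1)$ with no averaging reduction. As a sanity check, I might add that when $\mathrm{tr}\Sigma_1\ge\mathrm{tr}\Sigma_0$ and $k\le G/2$, the whole denominator decreases in $k$. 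This follows from comparing derivatives of $A/k+B/(G-k)$.
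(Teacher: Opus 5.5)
Your proposal is correct and follows the paper's proof essentially step for step. Both write $\hat g(\theta)=c_k(\bar S_1-\bar S_0)$ via Lemma~\ref{lem:grad-decomp}, compute the sample-mean moments from conditional independence, rewrite the SNR denominator as $\mathrm{tr}\,\mathrm{Cov}$, and cancel $c_k^2$. Your extra care is accurate and only tightens the same argument: you justify that conditioning on the full reward vector preserves the laws defining $\mu_r,\Sigma_r$, and you note that monotonicity holds only if the failure term $\mathrm{tr}(\Sigma_0)/(G-k)$ is frozen, which the paper's one-line justification glosses over.
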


\begin{proof}
From Lemma~\ref{lem:grad-decomp}, $\hat g(\theta)=c_k(\bar S_1-\bar S_0)$.
By conditional independence and~\eqref{eq:muSigma},
$\mathbb{E}[\bar S_1\mid K=k]=\mu_1$ and $\mathbb{E}[\bar S_0\mid K=k]=\mu_0$,
yielding~\eqref{eq:cond-mean}.
Similarly, $\mathrm{Cov}(\bar S_1\mid K=k)=\Sigma_1/k$ and $\mathrm{Cov}(\bar S_0\mid K=k)=\Sigma_0/(G-k)$,
and the cross-covariance is zero, so
$\mathrm{Cov}(\bar S_1-\bar S_0\mid K=k)=\Sigma_1/k + \Sigma_0/(G-k)$.
Multiplying by $c_k^2$ gives~\eqref{eq:cond-cov}.
Finally,
\[
\mathbb{E}\big[\|\hat g-\mathbb{E}[\hat g\mid K=k]\|_2^2\mid K=k\big]
=
\mathrm{tr}\big(\mathrm{Cov}(\hat g\mid K=k)\big)
=
c_k^2\left(\frac{\mathrm{tr}(\Sigma_1)}{k}+\frac{\mathrm{tr}(\Sigma_0)}{G-k}\right),
\]
and
$\|\mathbb{E}[\hat g\mid K=k]\|_2^2=c_k^2\|\mu_1-\mu_0\|_2^2$.
Canceling $c_k^2$ yields~\eqref{eq:snr-closed}. Monotonicity in $k$ follows because $\mathrm{tr}(\Sigma_1)/k$
decreases with $k$.
\end{proof}

\paragraph{Interpretation (``few successes $\Rightarrow$ noisy'').}
Equation~\eqref{eq:cond-cov} shows that the success-side uncertainty scales as $\Sigma_1/k$.
Thus when $k$ is small (especially $k=1$), the success-conditioned mean gradient $\mu_1$ is estimated with high variance,
and~\eqref{eq:snr-closed} shows the resulting GRPO update has low SNR.
When $k=0$, Lemma~\ref{lem:binary-adv} implies $\hat g(\theta)\approx 0$, yielding an uninformative update.

\begin{remark}[Clipping/KL do not eliminate scarcity noise]
\label{rem:clip}
The analysis above uses the unclipped surrogate for clarity.
In GRPO, clipping replaces each token term by a quantity whose magnitude is \emph{upper-bounded} by the unclipped term.
Therefore, clipping/KL can further \emph{reduce} the signal magnitude in rare-success batches, but cannot increase the number
of successes $k$ nor remove the $1/k$ sampling noise in estimating $\mu_1$.
\end{remark}

% -------------------------
\subsection{Additional Related Works}
\label{app:self_improve}
Reinforcement fine-tuning (RFT) has become a dominant paradigm for scaling reasoning in large language models. Early systems largely followed \textsc{PPO}-style RLHF with learned reward models \citep{ouyang2022training}. 
More recently, RLVR and critic-free objectives such as \textsc{GRPO} \citep{shao2024deepseekmath} showed that outcome-based supervision—often only final-answer correctness—can elicit strong long chain-of-thought reasoning while simplifying the RL stack. This has triggered a wave of work refining GRPO to improve learning efficiency, stability, and bias properties. For instance, \textsc{DAPO} \citep{yu2025dapo} adjusts group-relative advantage estimation and relaxes KL constraints to enable faster departure from the base model; \textsc{GMPO} \citep{zhao2025geometric} replaces GRPO’s arithmetic aggregation with a geometric mean to improve stability; \textsc{Dr.GRPO} \citep{liu2025understanding} analyzes sources of bias and proposes variants that remove length and standard-deviation normalization to better match an unbiased policy-gradient objective; and \textsc{GSPO} \citep{zheng2025group} moves from token-level to sequence-level likelihood ratios and clipping, aligning optimization more directly with sequence-level outcomes. Collectively, these methods sharpen the optimization of \emph{clean} reasoning performance: given a reliable prompt and a well-specified objective, they aim to make outcome-RL updates more effective and less destabilizing. 
% However, this line of work largely treats the \emph{conditioning context} as trustworthy—i.e., the prompt, intermediate trajectory, or provided solution is assumed to be correct and non-adversarial. 
In deployed settings, this assumption of a trustworthy conditioning context often fails: users make mistakes, partial solutions may contain errors, and intermediate context can be noisy, misleading, or distribution-shifted. 
As a result, a model can be a strong \emph{solo} reasoner yet remain brittle under fallible context, where the key capability is not merely to “solve,” but to \textbf{detect issues, repair the context, and recover the correct answer}. 

% -------------------------
\subsection{Full Benchmarks}
\label{app:full_benchmark_eval}
\label{app:baselines}
\begin{table*}[h]
\centering
\footnotesize
\setlength{\tabcolsep}{2.6pt}
\renewcommand{\arraystretch}{0.82}

\begin{tabular}{ll c c c c c c}
\toprule
& &
\multicolumn{1}{c}{\textbf{Recoverability}} &
\multicolumn{2}{c}{\textbf{Diagnosability}} &
\multicolumn{2}{c}{\textbf{Reliability}} &
\multicolumn{1}{c}{\textbf{Avg}} \\
\cmidrule(lr){3-3}
\cmidrule(lr){4-5}
\cmidrule(lr){6-7}
\cmidrule(lr){8-8}

\textbf{Model} & \textbf{Method} &
{GSM8K} &
{MR-GSM8K} & {MHPP} &
{GSM8K} & {CQA} &
{Avg} \\
\midrule

\multirow{6}{*}{R1-Distill-Qwen-1.5B}
& Initial ckpt              & \score{9.8}  & \score{4.5}  & \score{2.9}  & \score{33.3} & \score{37.1} & 17.5 \\
& \textsc{ASP}              & \meanstd{26.3}{1.8} & \meanstd{11.2}{0.7} & \meanstd{5.2}{0.4}  & \meanstd{44.4}{1.9} & \meanstd{45.3}{1.8} & 26.5 \\
& \textsc{GASP-p-fixed}     & \meanstd{30.8}{1.6} & \meanstd{13.0}{1.0} & \meanstd{6.0}{0.5}  & \meanstd{52.0}{2.0} & \meanstd{53.5}{1.9} & 31.1 \\
& \textsc{GASP-T}           & \meanstd{35.5}{1.5} & \meanstd{15.1}{1.2} & \meanstd{7.0}{0.6}  & \meanstd{59.9}{2.1} & \meanstd{61.2}{2.0} & 35.7 \\
& \textsc{GASP-p-gpt-5}      & \meanstd{74.2}{1.3} & \meanstd{40.8}{1.9} & \meanstd{15.4}{1.1} & \meanstd{63.9}{2.2} & \meanstd{64.0}{2.0} & 51.7 \\
\rowcolor{GASPshade}\cellcolor{white}\strut
& \textsc{GASP}             & \meanstd{78.5}{1.2} & \meanstd{43.7}{2.1} & \meanstd{16.9}{1.2} & \meanstd{64.8}{2.3} & \meanstd{64.7}{2.1} & \textbf{53.7} \\
\midrule

\multirow{6}{*}{DeepScaleR-1.5B}
& Initial ckpt              & \score{10.2} & \score{6.2}  & \score{3.3}  & \score{39.2} & \score{40.8} & 19.9 \\
& \textsc{ASP}              & \meanstd{19.6}{1.5} & \meanstd{12.4}{0.8} & \meanstd{7.1}{0.6}  & \meanstd{45.4}{2.0} & \meanstd{48.2}{2.1} & 26.5 \\
& \textsc{GASP-p-fixed}     & \meanstd{23.1}{1.2} & \meanstd{14.6}{1.1} & \meanstd{8.4}{0.7}  & \meanstd{53.2}{2.1} & \meanstd{56.5}{2.0} & 31.2 \\
& \textsc{GASP-T}           & \meanstd{26.5}{1.3} & \meanstd{16.7}{1.4} & \meanstd{9.6}{0.9}  & \meanstd{61.3}{2.2} & \meanstd{65.1}{2.2} & 35.8 \\
& \textsc{GASP-p-gpt-5}      & \meanstd{75.0}{1.2} & \meanstd{43.2}{2.0} & \meanstd{17.0}{1.3} & \meanstd{66.6}{2.3} & \meanstd{67.5}{2.1} & 53.9 \\
\rowcolor{GASPshade}\cellcolor{white}\strut
& \textsc{GASP}             & \meanstd{79.7}{1.1} & \meanstd{46.8}{2.3} & \meanstd{18.6}{1.4} & \meanstd{67.3}{2.4} & \meanstd{68.3}{2.3} & \textbf{56.1} \\
\midrule

\multirow{6}{*}{Qwen3-4B}
& Initial ckpt              & \score{8.9}  & \score{10.7} & \score{5.1}  & \score{46.5} & \score{48.7} & 24.0 \\
& \textsc{ASP}              & \meanstd{28.5}{2.0} & \meanstd{19.2}{1.3} & \meanstd{9.6}{0.8}  & \meanstd{56.7}{2.2} & \meanstd{68.9}{2.5} & 36.6 \\
& \textsc{GASP-p-fixed}     & \meanstd{33.2}{1.6} & \meanstd{22.4}{1.4} & \meanstd{11.2}{1.0} & \meanstd{66.8}{2.1} & \meanstd{73.0}{2.0} & 41.3 \\
& \textsc{GASP-T}           & \meanstd{38.5}{1.5} & \meanstd{25.9}{1.7} & \meanstd{13.0}{1.1} & \meanstd{76.5}{2.0} & \meanstd{78.6}{2.3} & 46.5 \\
& \textsc{GASP-p-gpt-5}      & \meanstd{78.0}{1.3} & \meanstd{56.0}{2.2} & \meanstd{27.5}{1.7} & \meanstd{86.3}{1.9} & \meanstd{79.2}{2.1} & 65.4 \\
\rowcolor{GASPshade}\cellcolor{white}\strut
& \textsc{GASP}             & \meanstd{81.2}{0.9} & \meanstd{59.3}{2.5} & \meanstd{29.8}{1.9} & \meanstd{88.7}{1.8} & \meanstd{79.7}{2.4} & \textbf{67.7} \\
\midrule

\multirow{6}{*}{Qwen3-8B (lora)}
& Initial ckpt              & \score{7.2}  & \score{16.2} & \score{8.2}  & \score{53.6} & \score{69.9} & 31.0 \\
& \textsc{ASP}              & \meanstd{34.4}{2.3} & \meanstd{28.7}{1.9} & \meanstd{14.4}{1.2} & \meanstd{69.9}{2.6} & \meanstd{85.4}{2.1} & 46.6 \\
& \textsc{GASP-p-fixed}     & \meanstd{40.2}{1.7} & \meanstd{33.0}{1.6} & \meanstd{16.6}{1.3} & \meanstd{79.8}{2.0} & \meanstd{90.5}{1.6} & 52.0 \\
& \textsc{GASP-T}           & \meanstd{46.4}{1.6} & \meanstd{38.7}{2.0} & \meanstd{19.4}{1.7} & \meanstd{89.0}{1.9} & \meanstd{94.0}{1.5} & 57.5 \\
& \textsc{GASP-p-gpt-5}      & \meanstd{83.0}{0.8} & \meanstd{68.5}{2.0} & \meanstd{50.5}{2.4} & \meanstd{89.5}{1.6} & \meanstd{94.5}{1.3} & 77.2 \\
\rowcolor{GASPshade}\cellcolor{white}\strut
& \textsc{GASP}             & \meanstd{85.7}{0.7} & \meanstd{72.7}{2.1} & \meanstd{54.1}{2.8} & \meanstd{89.8}{1.5} & \meanstd{94.8}{1.2} & \textbf{79.4} \\
\bottomrule
\end{tabular}

\vspace{2pt}
\caption{\textbf{Main robustness results (pass@1).} Mean $\pm$ std over 3 training runs for \textsc{ASP}/\textsc{GASP-p-fixed}/\textsc{GASP-T}/\textsc{GASP-p-gpt5}/\textsc{GASP}; \textbf{Initial ckpt} is a single evaluation. \textbf{Recoverability}: GSM8K accuracy under corrupted context (held-out GPT-5 polluter). \textbf{Diagnosability}: \textbf{ACCreason} on MR-GSM8K and MHPP \citep{zeng2023mr,zeng2024mr}. \textbf{Reliability}: accuracy on perturbed GSM8K and CommonsenseQA from RUPBench \citep{wang2024rupbench}. \textbf{Avg}: unweighted mean of the five metrics. \textsc{GASP-T} uses teacher-generated (GPT-5) repair guidance in place of self-guided in-distribution repairs. \textsc{GASP-p-fixed} fixes the polluter (no adversarial polluter updates) during training. \textsc{GASP-p-gpt-5} replaces the learned polluter with GPT-5 during training.}
\label{tab:main}
\vspace{-25pt}
\end{table*}

\section{Prompt Templates and Role Conditioning}
\label{app:prompts}

\lstdefinestyle{promptcode}{
  basicstyle=\ttfamily\small,
  columns=fullflexible,
  keepspaces=true,
  breaklines=true,
  breakatwhitespace=false,
  frame=single,
  rulecolor=\color{black!20},
  showstringspaces=false,
  upquote=true,
  tabsize=2,
  backgroundcolor=\color{blue!6} % <-- light blue background
}

\begin{itemize}[leftmargin=*]
    \item \textbf{Agent prompt (system prompt only).}
\begin{lstlisting}[style=promptcode]
[SYSTEM]
    A conversation between User and Assistant. The user asks a question, and the Assistant solves it.
    The assistant first thinks about the reasoning process in the mind and then provides the user with the answer.
    The reasoning process and answer are enclosed within <think> </think> and <answer> </answer> tags, respectively, i.e.,
    <think> reasoning process here </think><answer> answer here </answer>
\end{lstlisting}
    \item \textbf{Polluter prompt.}
\begin{lstlisting}[style=promptcode]
[SYSTEM]
You are a helpful assistant.

[USER]
You will be given:
  - a math problem,
  - the student's reasoning prefix (what they have already written),
  - and a SHORT CONTINUATION TEXT that the student originally wrote.

Your job is to EDIT ONLY that short continuation text so that:
  - it stays natural, coherent, and in the same style and level of detail,
  - it contains at least one explicit mathematical statement (numbers / formula / relation),
  - and at least one of those mathematical statements becomes objectively incorrect.

You must treat the original continuation as a piece of text to be edited IN PLACE, not as a topic to write a new paragraph about.

EDIT PRIORITY:
  1) If the original continuation already contains any numbers, numeric expressions, indices,
     or concrete mathematical facts, you MUST introduce the error by MODIFYING those existing
     numbers or facts (for example, change 21st to 25th; change 1, 10, 19 to 1, 16, 29;
     change 10 - 1 = 9 to 10 - 1 = 8; change (n-1)d to (n+1)d).
     In this case you are NOT allowed to introduce entirely new sentences, and you should avoid
     inventing new numbers that did not appear before, except by locally changing some digits.
  2) Only if the original continuation contains NO numerical or mathematical content at all
     may you add a new mathematical statement and make that statement false.

CRITICAL CONSTRAINT ON EQUATIONS:
  - If the text contains any equations of the form 'a - b = c', 'a + b = c', or 'a * b = c',
    the rewritten text MUST contain at least one such equation that is mathematically FALSE.
  - You must NOT leave all such equations mathematically correct after you edit the numbers.
  - For example, if you change '10 - 1 = 9', you must ensure the new equality is wrong, such as
    '16 - 1 = 9' or '10 - 1 = 8'.

OUTPUT RULES:
  - The output must be a lightly edited version of the original continuation, not a new paragraph.
  - Keep the same number of sentences if possible.
  - Do NOT state any final numeric answer.
  - Do NOT say that anything is wrong, misleading, or experimental.
  - Output ONLY the edited continuation, wrapped exactly in <polluted>...</polluted> tags.
  - No extra text before or after the <polluted> block.

[PROBLEM]
{q}

[STUDENT REASONING PREFIX]
{prefix}

[ORIGINAL SHORT CONTINUATION TO EDIT]
{continuation}

[YOUR OUTPUT]
(Only the edited continuation, wrapped in <polluted>...</polluted>.)
\end{lstlisting}

    \item \textbf{Teacher and Self-guidance generation prompt.} Including which context is shown (e.g., $(q,c_{0:\alpha})$ and both $(w^{\mathrm{clean}},w^{\mathrm{poll}})$) and the instruction to produce a brief appendable repair.

% In the document
\begin{lstlisting}[style=promptcode]
    [SYSTEM]
    You are a helpful reasoning assistant. Solve the user's problem.
    
    [USER]
    You are helping generate a short "repair continuation" for robustness training.
    
    You will be given:
    (1) a problem q
    (2) the student/agent's reasoning prefix c_0:alpha
    (3) two alternative next windows:
        - CLEAN window w_clean (the correct continuation)
        - POLLUTED window w_poll (a locally coherent but potentially wrong/misleading continuation)
    
    Your job:
    Write a SHORT continuation that would come *immediately after the POLLUTED window* and "repairs" the trajectory--i.e., it identifies/corrects the key mistake implied by w_poll and steers back toward the correct solution path.
    
    Critical constraints (important):
    - Output ONLY the repair continuation text (no headings, no bullet points, no meta commentary).
    - Do NOT mention the existence of w_clean or w_poll, and do NOT compare them explicitly.
    - The continuation must be natural as if the model only saw: (q, c_0:alpha, w_poll).
    - Keep it short (aim: ~1-4 sentences). Do not re-solve from scratch.
    - Prefer minimal intervention: point out the inconsistency, fix the specific step, then continue one step forward.
    - Maintain the same tone/style/level of detail as the prefix.
    - Do not output the final numeric answer unless the polluted context already forces you to (otherwise stop before the final answer).
    
    Now generate the repair continuation.
    
    [PROBLEM q]
    {q}
    
    [REASONING PREFIX c_0:alpha]
    {prefix}
    
    [CLEAN WINDOW w_clean -- for your reference only]
    {w_clean}
    
    [POLLUTED WINDOW w_poll -- this is what the agent saw at deployment]
    {w_poll}
    
    [YOUR OUTPUT: repair continuation that follows immediately after w_poll]
\end{lstlisting}

    % \item \textbf{Role-conditioning mechanism.} Specify how \textsc{pollute} vs \textsc{solve} is instantiated (e.g., special tokens, system strings, prefix tags), and confirm it is the same backbone $\pi_\theta$.
    % \item \textbf{Parsing rules.} State how generations are parsed into (i) corrupted windows, (ii) repair snippets, and (iii) final answers.
\end{itemize}

% -------------------------
\section{Diagnosability Evaluation Details}
\label{app:meta_reasoning}
\begin{itemize}[leftmargin=*]
    \item Evaluation prompt:

\begin{lstlisting}[style=promptcode]
[SYSTEM]
Act as a grade school math teacher and score the following problem solution.

[USER]
Question:
{data['question']}
Student Solution:
{sol_steps}
Your task involves three parts:
1. **Step-by-step Evaluation:** Go through the student solution carefully and identify
key errors and potential misunderstandings that led to the incorrect solution.
2. **Final Judgement:** Provide an overall judgement on the correctness of the
student's solution.
3. **First Error Step:** If the solution is incorrect, generate the step number where
the first error occurs, otherwise generate N/A here
4. **Error Analysis:** If the solution is incorrect, analyse the cause and reasons for
the first error step, otherwise generate N/A here
Here's the format I want:
Step-by-step Evaluation: [Provide a step by step examination of the student solution and
identify key errors and misunderstandings here.]
Final Judgement: [Insert only **correct** or **wrong** here]
First Error Step: [Insert either N/A or the step number where the first error occurs]
Error Analysis: [Insert either N/A or the analysis of error in the first error among
solution steps]
Please follow this format without any additional introductory or concluding statements.
\end{lstlisting}

\end{itemize}

% -------------------------
\section{Self-Revision Evaluation Details}
\label{app:rupbench}
\begin{itemize}[leftmargin=*]
    \item valuation prompt:

\begin{lstlisting}[style=promptcode]
[SYSTEM]
    A conversation between User and Assistant. The user asks a question, and the Assistant solves it.
    The assistant first thinks about the reasoning process in the mind and then provides the user with the answer.
    The reasoning process and answer are enclosed within <think> </think> and <answer> </answer> tags, respectively, i.e.,
    <think> reasoning process here </think><answer> answer here </answer>

[USER]
Question:
{QUESTION}

Your previous solution (incorrect):
{INCORRECT_SOLUTION}

Task:
1) Identify the key mistake(s) in the previous solution.
2) Produce a corrected solution.
3) Provide the final answer in the form boxed{answer} at the end of your response.
\end{lstlisting}

\end{itemize}

% -------------------------
\section{Hyperparameters and Implementation Details}
\label{app:hyperparams}
\paragraph{Optimization and precision.}
Unless otherwise stated, we optimize all models with AdamW (PyTorch implementation) using bfloat16 training and gradient checkpointing. We use a cosine learning-rate schedule with warmup ratio $0.1$, peak learning rate $3\times 10^{-6}$, weight decay $0$, and gradient-norm clipping at $1.0$. All runs use a set of fixed random seed (42, 52, 62).

\paragraph{Parameter-efficient fine-tuning.}
For the 8B backbone (Qwen3-8B), we use LoRA-based fine-tuning to fit training within limited hardware. LoRA adapters are applied to the attention and MLP projection layers with rank $r=64$ (shared across modules). All base model weights are frozen, and only LoRA parameters are updated during training. This configuration allows stable GRPO self-play training on 4$\times$A100 GPUs without model or tensor parallelism, while preserving sufficient adaptation capacity for robust reasoning and repair behaviors.

\paragraph{GRPO rollouts.}
We use Group Relative Policy Optimization (GRPO) with clipping parameter $\epsilon=0.2$ (Eq.~\ref{grpoObjective}). For each update, we sample $G=64$ trajectories per conditioning context at temperature $0.7$ (top-$k=50$, top-$p=1.0$). We cap the maximum prompt length to 1024 tokens and the maximum completion length to 1024 tokens. Unless otherwise specified, we use no explicit KL regularization to a reference policy (KL coefficient $\beta=0$). We use $\lambda = 0.07$ for the guidance term across all models except Qwen3-8B, where we use $\lambda = 0.04$. During training, we linearly anneal $\lambda$ to $0$ over the final steps.

\paragraph{Self-play scheduling.}
\textsc{GASP} alternates optimization between the agent and polluter roles instantiated by the same backbone model. We use a blocked update schedule: the agent performs 5 optimizer steps while holding the polluter fixed, then the polluter performs 5 optimizer steps while holding the agent fixed, repeating throughout training. When optimizing the polluter, the agent's rollout is treated as a black-box environment outcome (no backpropagation through the agent trajectory).

\paragraph{Implementation.}
We run training with vLLM-based generation for rollouts and standard distributed training utilities and modified based on TRL \citep{vonwerra2020trl}.

% -------------------------
\section{Representation Shift Measurements}
\label{app:repr_shift}

\paragraph{PCA-based representation shift.}
To quantify representational drift induced by different training procedures, we measure layer-wise shifts in hidden-state representations using principal component analysis (PCA), following prior work \citep{huan2025does}. 
Let $X$ denote a fixed probe set of inputs (e.g., TruthfulQA or CommonsenseQA). For each model state $\ast \in \{\text{base}, \text{post-training}\}$ and each transformer layer $i \in \{1,\dots,L\}$, we collect the hidden states
\[
H_i^{(\ast)} \in \mathbb{R}^{N \times d},
\]
where $N$ aggregates token representations over all inputs in $X$ and $d$ is the hidden dimension. We apply PCA with $n=2$ components to $H_i^{(\ast)}$ and compute the mean projection onto each principal axis:
\[
m^{(\ast)}_{i,k} = \mathrm{mean}\!\left( \mathrm{proj}_{\mathrm{PC}k}\!\left(H_i^{(\ast)}\right) \right), \quad k \in \{1,2\}.
\]

We define the PC1 shift relative to the base model as
\[
\Delta m^{(\ast)}_{i,1} = m^{(\ast)}_{i,1} - m^{(\text{base})}_{i,1},
\]
and form a 2D representation for each layer
\[
z^{(\ast)}_i = \bigl(\Delta m^{(\ast)}_{i,1},\; m^{(\ast)}_{i,2}\bigr).
\]
Each point in the PCA shift plots (e.g., Fig.~5) corresponds to one transformer layer.

To summarize global representational movement, we compute the centroid across layers
\[
z^{(\ast)} = \frac{1}{L} \sum_{i=1}^{L} z^{(\ast)}_i,
\]
and report the scalar PCA shift magnitude
\[
d^{(\ast)} = \left\lVert z^{(\ast)} - z^{(\text{base})} \right\rVert_2.
\]
Larger $d^{(\ast)}$ indicates greater global drift of internal representations on the probe distribution.

\end{document}